\newcommand{\mymacro}[1]{{#1}}
\newcommand{\defn}[1]{\textbf{#1}}
\newcommand{\paroutline}[3][false]{%
    \ifnum\pdfstrcmp{#1}{true}=0
        #3
    \else
        [\textit{\textcolor{DiverseMagenta}{#2}}] \textcolor{AccentBlue}{#3}
    \fi
}
\newcommand{\uha}{\mymacro{\relsize{-1}{\textsf{UHA}}}\xspace}
\newcommand{\uhat}{\mymacro{\relsize{-1}{\textsf{UHAT}}}\xspace}
\newcommand{\uhats}{\mymacro{{\relsize{-1}{\textsf{UHAT}}}s}\xspace}
\newcommand{\aha}{\mymacro{\relsize{-1}{\textsf{AHA}}}\xspace}
\newcommand{\ind}[1]{\mathbbm{1} \left\{ #1 \right\}}
\newcommand{\R}{{\mymacro{ \mathbb{R}}}}
\newcommand{\alphabet}{{\mymacro{ \Sigma}}}
\newcommand{\generalAlphabet}{{\mymacro{ \Xi}}}
\newcommand{\stackalphabet}{{\mymacro{ \Gamma}}}
\newcommand{\kleene}[1]{{\mymacro{#1^*}}}
\newcommand{\kleeneplus}[1]{{\mymacro{#1^+}}}
\newcommand{\str}{{\mymacro{\boldsymbol{w}}}}
\newcommand{\strlet}{{\mymacro{ \str_{\leq\tstep}}}}
\newcommand{\strlen}{{\mymacro{T}}}
\newcommand{\strs}{{\mymacro{\boldsymbol{s}}}}
\newcommand{\strx}{{\mymacro{\boldsymbol{x}}}}
\newcommand{\stry}{{\mymacro{\boldsymbol{y}}}}
\newcommand{\strz}{{\mymacro{\boldsymbol{z}}}}
\newcommand{\stacksym}{{\mymacro{\stacksymbol{\gamma}}}}
\newcommand{\defeq}{\mathrel{\stackrel{\textnormal{\tiny def}}{=}}}
\newcommand{\NTo}[1]{{\mymacro{\left[ #1 \right]}}}
\newcommand{\set}[1]{{\mymacro{\{ #1 \}}}}
\newcommand{\orderedSubsets}[1]{{\mymacro{\mathrm{OS}{(#1)}}}}
\newcommand{\idxn}{{\mymacro{ n}}}
\newcommand{\idxi}{{\mymacro{ i}}}
\newcommand{\idxk}{{\mymacro{ k}}}
\newcommand{\nsymbols}{{\mymacro{ |\alphabet|}}}
\newcommand{\tstep}{{\mymacro{ t}}}
\newcommand{\eos}{{\mymacro{\textsc{eos}}}}
\newcommand{\scoref}{{\mymacro{\texttt{score}}}}
\newcommand{\reverse}{\mymacro{R}}
\newcommand{\automaton}{{\mymacro{ \mathcal{A}}}}
\newcommand{\wfsa}{{\mymacro{ \automaton}}}
\newcommand{\wfsar}{{\mymacro{ \automaton}}^\reverse}
\newcommand{\stateq}{{\mymacro{ q}}}
\newcommand{\statep}{{\mymacro{ p}}}
\newcommand{\states}{{\mymacro{ Q}}}
\newcommand{\trans}{{\mymacro{ \delta}}}
\newcommand{\transclosure}{{\mymacro{ \delta^*}}}
\newcommand{\final}{{\mymacro{ F}}}
\newcommand{\qinit}{{\mymacro{ q_{\iota}}}}
\newcommand{\fsatuple}{{\mymacro{ \mleft( \alphabet, \states, \qinit, \final, \trans \mright)}}}
\newcommand{\satuple}{{\mymacro{ \mleft( \alphabet, \states, \trans \mright)}}}
\newcommand{\dfatuple}{{\mymacro{ \mleft( \alphabet, \states, \qinit, \final, \trans \mright)}}}
\newcommand{\eqclass}[1]{{\mymacro{ \left[#1\right]}}}
\newcommand{\hiddDim}{{\mymacro{ D}}}
\newcommand{\proj}{{\mymacro{\texttt{proj}}}}
\newcommand{\projfunc}[1]{{\mymacro{\proj_{#1}}}}
\newcommand{\enc}{{\mymacro{\boldsymbol{h}}}}
\newcommand{\simplexFun}[1]{{\mymacro{ \boldsymbol{\Delta}}^{#1}}}
\newcommand{\negterm}[1]{{\mymacro{ {\raise.17ex\hbox{$\scriptstyle\sim$}} #1}}}
\newcommand{\ifcondition}{\textbf{if }}
\newcommand{\otherwisecondition}{\textbf{otherwise }}
\newcommand{\stacksymbol}[1]{{\mymacro{ #1 }}}
\newcommand{\ignore}[1]{}
\newcommand{\expandLater}[1]{}
\newcommand{\attn}{\mymacro{\texttt{att}}}
\newcommand{\attnFun}[1]{\mymacro{\attn}(#1)}
\newcommand{\tf}{\mymacro{\relsize{-0.25}{\textsf{T}}}\xspace}
\newcommand{\transformers}{\mymacro{\mathcal{T}}}
\newcommand{\tfencfun}{\mymacro{\enc}}
\newcommand{\tiebreak}{\mymacro{\mathcal{C}}}
\newcommand{\rightmost}{\mymacro{\mathlarger{\blacktriangleright}}}
\newcommand{\leftmost}{\mymacro{\mathlarger{\blacktriangleleft}}}
\newcommand{\futuremask}{\mymacro{\textnormal{\textsf{F}}}}
\newcommand{\pastmask}{\mymacro{\textnormal{\textsf{P}}}}
\newcommand{\hardmax}{\mymacro{\mathrm{hardmax}}}
\newcommand{\tflayer}{\mymacro{\mathrm{\mathcal{L}}}}
\newcommand{\tflayerPos}[1]{\mymacro{\tflayer}_{#1}}
\newcommand{\layerIdx}{\mymacro{\ell}}
\newcommand{\nLayers}{\mymacro{L}}
\newcommand{\embedFunc}{\mymacro{\texttt{embed}}}
\def\1{\mathbf{1}}
\def\eps{{\mymacro{ \varepsilon}}}
\def\rmH{{{\mymacro{ \mathbf{H}}}}}
\def\rmK{{{\mymacro{ \mathbf{K}}}}}
\def\rmQ{{{\mymacro{ \mathbf{Q}}}}}
\def\rmV{{{\mymacro{ \mathbf{V}}}}}
\def\vtheta{{{\mymacro{ \boldsymbol{\theta}}}}}
\def\va{{{\mymacro{ \bm{a}}}}}
\def\vs{{{\mymacro{ \bm{s}}}}}
\def\vx{{{\mymacro{ \bm{x}}}}}
\def\vy{{{\mymacro{ \bm{y}}}}}
\def\vz{{{\mymacro{ \bm{z}}}}}
\def\evs{{{\mymacro{ s}}}}
\def\gL{{{\mymacro{ \mathcal{L}}}}}
\def\gR{{{\mymacro{ \mathcal{R}}}}}
\def\sF{{{\mymacro{ \mathcal{F}}}}}
\def\sM{{{\mymacro{ \mathcal{M}}}}}
\def\sN{{{\mymacro{ \mathcal{N}}}}}
\def\sX{{{\mymacro{ \mathcal{X}}}}}
\newcommand{\N}{{\mymacro{ \mathbb{N}}}}
\newcommand{\NgtZero}{{\mymacro{ \N_{\geq 1}}}}
\DeclareMathOperator*{\argmax}{{\mymacro{ argmax}}}
\setlist[enumerate]{wide=0pt, leftmargin=0pt, listparindent=\parindent, itemsep=\parskip, parsep=0pt}
\newcommand{\bb}[1][]{\ifthenelse{\isempty{#1}}{\mymacro{\mathbf{b}}}{\mymacro{\mathbf{b}^{\text{#1}}}}}
\newcommand{\ff}[1][]{\ifthenelse{\isempty{#1}}{\mymacro{f}}{\mymacro{f_{\text{#1}}}}}
\newcommand{\W}[1][]{\ifthenelse{\isempty{#1}}{\mymacro{\mathbf{W}}}{\mymacro{\mathbf{W}^{\text{#1}}}}}
\newcommand{\stacktop}[1][]{\ifthenelse{\isempty{#1}}{\mymacro{\gamma^{\text{top}}}}{\mymacro{\gamma^{\text{top}}_{#1}}}}
\newcommand{\sym}{\mymacro{{w}}}
\newcommand{\syma}{\mymacro{{a}}}
\newcommand{\symb}{\mymacro{{b}}}
\newcommand{\recognizer}{\mymacro{\mathtt{R}}}
\newcommand{\recognizerFun}[1]{\mymacro{\recognizer}\mleft(#1\mright)}
\newcommand{\lang}{\mymacro{\mathbb{L}}}
\newcommand{\langFun}[1]{\mymacro{\lang}\mleft(#1\mright)}
\newcommand{\rlang}{\mymacro{\mathbb{L}}^\mymacro{R}}
\newcommand{\since}{\mymacro{\mathrel{\textbf{S}}}}
\newcommand{\until}{\mymacro{\mathrel{\textbf{U}}}}
\newcommand*{\bigcdot}{}
\DeclareRobustCommand*{\bigcdot}{%
  \mathbin{\mathpalette\bigcdot@{}}%
}
\newcommand*{\bigcdot@scalefactor}{.5}
\newcommand*{\bigcdot@widthfactor}{1.15}
\newcommand*{\bigcdot@}[2]{%
  \sbox0{$#1\vcenter{}$}
  \sbox2{$#1\cdot\m@th$}%
  \hbox to \bigcdot@widthfactor\wd2{%
    \hfil
    \raise\ht0\hbox{%
      \scalebox{\bigcdot@scalefactor}{%
        \lower\ht0\hbox{$#1\bullet\m@th$}%
      }%
    }%
    \hfil
  }%
}
\newcommand\past{\mymacro{\ensurestackMath{%
  \stackengine{1pt}{\raisebox{-.8pt}{$\Diamond$}}{\raisebox{.5pt}{\scalebox{.5}[.5]{$-$}}}{O}{c}{F}{F}{L}}}}
\newcommand\future{\mymacro{\ensurestackMath{%
  \stackengine{1pt}{\raisebox{-.8pt}{$\Diamond$}}{\raisebox{.5pt}{\scalebox{.5}[.5]{$+$}}}{O}{c}{F}{F}{L}}}}
\newcommand{\TRUE}{\mymacro{\textsc{true}}}
\newcommand{\ltlAcr}{\mymacro{\textnormal{\textbf{LTL}}}}
\newcommand{\ltl}{\mymacro{\ltlAcr[\past,\future,\since,\until]}}
\newcommand{\ptl}{\mymacro{\ltlAcr[\past]}}
\newcommand{\ftl}{\mymacro{\ltlAcr[\future]}}
\newcommand{\utl}{\mymacro{\ltlAcr[\until]}}
\newcommand{\stl}{\mymacro{\ltlAcr[\since]}}
\DeclareMathOperator*{\ordEq}{\mymacro{\simeq_{\symOrd}}}
\newcommand{\tfFL}{\mymacro{\transformers^{\scaleto{\hspace{0.8pt}\futuremask}{5pt}}_{\scaleto{\leftmost}{4.5pt}}}}
\newcommand{\tfPR}{\mymacro{\transformers^{\scaleto{\hspace{0.8pt}\pastmask}{5pt}}_{\scaleto{\rightmost}{4.5pt}}}}
\newcommand{\tfFR}{\mymacro{\transformers^{\scaleto{\hspace{0.8pt}\futuremask}{5pt}}_{\scaleto{\rightmost}{4.5pt}}}}
\newcommand{\tfPL}{\mymacro{\transformers^{\scaleto{\hspace{0.8pt}\pastmask}{5pt}}_{\scaleto{\leftmost}{4.5pt}}}}
\newcommand{\brasp}{\mymacro{\textbf{B-RASP}}\xspace}
\newcommand{\braspFL}{\mymacro{\brasp{}^{\scaleto{\hspace{0.8pt}\futuremask}{5pt}}_{\scaleto{\leftmost}{4.5pt}}}}
\newcommand{\length}{\mymacro{T}}
\newcommand{\layernumber}{\mymacro{L}}
\NewDocumentCommand{\transformer}{O{} o o o }{
  \IfBlankTF{#1}{
    \IfNoValueTF{#2}{
      \IfNoValueTF{#4}{\rmH}{\rmH(#4)}
    }{
      \IfNoValueTF{#4}{\mymacro{\rmH_{#2,#3}}}{\mymacro{\rmH(#4)_{#2,#3}}}
    }
  }{
    \IfNoValueTF{#2}{
      \IfNoValueTF{#4}{\mymacro{\rmH^{(#1)}}}{\mymacro{\rmH^{(#1)}(#4)}}
    }{
      \IfNoValueTF{#4}{\mymacro{\rmH^{(#1)}_{#2,#3}}}{\mymacro{\rmH^{(#1)}(#4)_{#2,#3}}}
    }
  }
}
\NewDocumentCommand{\attention}{o}{\IfNoValueTF{#1}{\mymacro{\mathbf{A}}}{\mymacro{\mathbf{A}^{(#1)}}}}
\NewDocumentCommand{\ffn}{o}{\IfNoValueTF{#1}{\mymacro{\mathbf{F}}}{\mymacro{\mathbf{F}^{(#1)}}}}
\NewDocumentCommand{\fo}{o}{\IfNoValueTF{#1}{\mymacro{\textbf{FO}[\mathord<]}}{\mymacro{\textbf{FO}^{#1}[\mathord<]}}}
\NewDocumentCommand{\pfo}{o}{\IfNoValueTF{#1}{\mymacro{\textnormal{\textbf{PFO}}^2[\mathord<]}}{\mymacro{\textnormal{\textbf{PFO}}^2[\mathord<,#1]}}}
\NewDocumentCommand{\ffo}{o}{\IfNoValueTF{#1}{\mymacro{\textnormal{\textbf{FFO}}^2[\mathord<]}}{\mymacro{\textnormal{\textbf{FFO}}^2[\mathord<,#1]}}}
\newcommand{\fof}{\mymacro{\phi}}
\newcommand{\tlf}{\mymacro{\psi}}
\newcommand{\atom}{\mymacro{\pi}}
\NewDocumentCommand{\query}{o o }{\IfNoValueTF{#1}{\rmQ}{\rmQ_{#1,#2}}}
\NewDocumentCommand{\key}{o o }{\IfNoValueTF{#1}{\rmK}{\rmK_{#1,#2}}}
\NewDocumentCommand{\val}{o o }{\IfNoValueTF{#1}{\rmV}{\rmV_{#1,#2}}}
\newcommand{\bestsymbol}{\mymacro{\textnormal{\relsize{-1}\textsf{best}}}}
\newcommand{\actualOrder}{\mymacro{\textnormal{\relsize{-1}\textsf{order}}}}
\newcommand{\tffunc}{\mymacro{\boldsymbol{\lambda}}}
\newcommand{\tffuncFun}[1]{\mymacro{\boldsymbol{\tffunc}}_{#1}}
\newcommand{\symOrd}{\mymacro{\omega}}
\newcommand{\ordss}{\mymacro{\boldsymbol{\sX}}}
\newcommand{\ordssEl}{\mymacro{\boldsymbol{\vz}}}
\newcommand{\symOrdFun}[1]{\mymacro{\symOrd}\mleft(#1\mright)}
\newcommand{\maps}[2]{\mymacro{\text{Map}}\mleft(#1, #2\mright)}
\newcommand{\monoid}{\mymacro{\mathbb{M}}}
\newcommand{\relation}{\mymacro{ \preceq}}
\newcommand{\pofaAcr}{\mymacro{POFA}}
\newcommand{\rpofaAcr}{\mymacro{RPOFA}}
\newcommand{\mask}{\mymacro{m}}
\newcommand{\maskFun}[1]{\mymacro{\mask}\mleft(#1\mright)}
\newcommand{\unmaskedSet}{\mymacro{\sM}}
\newcommand{\unmaskedSetFun}[1]{\mymacro{\unmaskedSet}\mleft(#1\mright)}
\newcommand{\maximizers}{\mymacro{\sN}}
\newcommand{\maximizersFun}[1]{\mymacro{\maximizers}\mleft(#1\mright)}
\newcommand{\normf}{\mymacro{\texttt{norm}}}
\newcommand{\atomvec}{\mymacro{Q}}
\newcommand{\inductvec}{\mymacro{P}}
\newcommand{\outputvec}{\mymacro{Y}}
\newcommand{\scorevec}{\mymacro{s}}
\newcommand{\valuevec}{\mymacro{v}}
\newcommand{\defaultvec}{\mymacro{d}}
\newcommand{\braspprog}{\mymacro{\mathcal{P}}}
\newcommand{\halfreset}{\mymacro{\mathcal{H}}}
\newcommand{\homomorphism}{\mymacro{\phi}}
\newcommand{\fset}{\mymacro{F}}
\newcommand{\formulaAdd}[3]{\fof^{(#1)}_{#2 \gets #3}}
\title{Unique Hard Attention: A Tale of Two Sides}
\author{
Selim Jerad
~\;~\;~
Anej Svete
~\;~\;~
Jiaoda Li%
~\;~\;~Ryan Cotterell\\
\texttt{\{\href{mailto:sjerad@student.ethz.ch}{sjerad}, \href{mailto:anej.svete@inf.ethz.ch}{anej.svete},
\href{mailto:jiaoda.li@inf.ethz.ch}{jiaoda.li}, \href{mailto:ryan.cotterell@inf.ethz.ch}{ryan.cotterell}\}@ethz.ch}\\
    {%
\setlength{\fboxsep}{2.5pt}%
\setlength{\fboxrule}{2.5pt}%
\fcolorbox{white}{white}{
    \includegraphics[width=.15\linewidth]{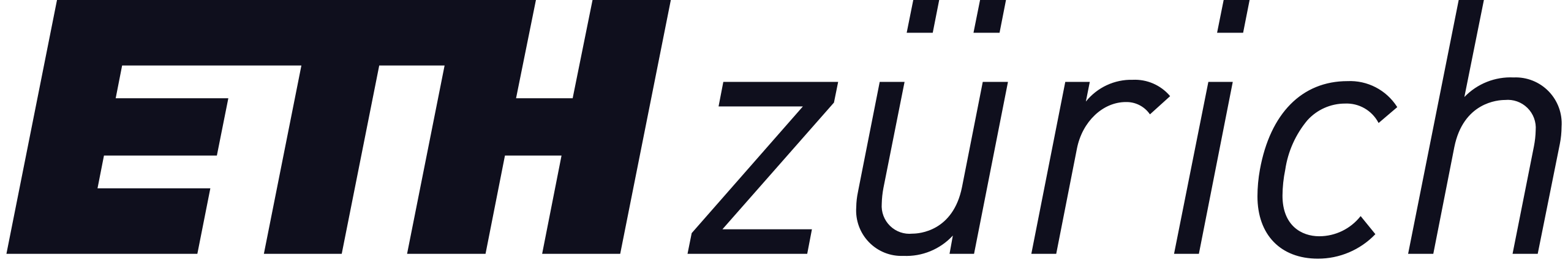}
}
}}
\begin{document}

\maketitle

\begin{quote}
    \centering
     \emph{``A wonderful fact to reflect upon, that leftmost and rightmost unique hard attention are constituted to be profoundly distinct.''}
\end{quote}

\begin{abstract}
    Understanding the expressive power of transformers has recently attracted attention, as it offers insights into their abilities and limitations.
    Many studies analyze unique hard attention transformers, where attention selects a single position that maximizes the attention scores. 
    When multiple positions achieve the maximum score, either the rightmost or the leftmost of those is chosen. 
    In this paper, we highlight the importance of this seeming triviality. 
    Recently, finite-precision transformers with both leftmost- and rightmost-hard attention were shown to be equivalent to Linear Temporal Logic (\ltlAcr{}). 
    We show that this no longer holds with only leftmost-hard attention---in that case, they correspond to a \emph{strictly weaker} fragment of \ltlAcr{}.
    Furthermore, we show that models with leftmost-hard attention are equivalent to \emph{soft} attention, suggesting they may better approximate real-world transformers than right-attention models.
    These findings refine the landscape of transformer expressivity and underscore the role of attention directionality.
\end{abstract}

\section{Introduction}

Much work has recently been done on understanding the capabilities and limitations of transformers \citep{vaswani2023attentionneed}. Collectively, the body of work on the representational capacity of transformers has provided a nuanced picture of the landscape \citep[][\textit{inter alia}]{JMLR:v22:20-302,Hahn_2020,chiang-cholak-2022-overcoming,hao2022formallanguagerecognitionhard,merrill-etal-2022-saturated,merrill2023logicexpressinglogprecisiontransformers,chiang2023tighterboundsexpressivitytransformer,yang2024maskedhardattentiontransformersrecognize,Strobl_2024,svete-cotterell-2024-transformers,nowak-etal-2024-representational,jiaodaspaper}.\footnote{\cref{app:related-work} holds a more detailed overview of related work.}
Any such investigation begins by choosing an idealization of the architecture. 
A common modeling decision is to use \defn{unique hard attention} (\uha), which selects a single position maximizing the attention scores \citep{Hahn_2020,hao2022formallanguagerecognitionhard,barcelo2023logicallanguagesacceptedtransformer}.
In one of the first exact descriptions of \uha expressivity, \citet{yang2024maskedhardattentiontransformersrecognize} show that \uha transformers (\uhats) with no positional encodings, strict future masking, and \emph{either leftmost} or \emph{rightmost} tiebreaking are equivalent to linear temporal logic, \ltlAcr{}.
This connects \uhats to well-understood formalism such as the star-free languages and counter-free automata.
An incautious reading of \citeauthor{yang2024maskedhardattentiontransformersrecognize}'s result could lead one to generalize the equivalence to all \uhats.
However, we zoom in on the overlooked choice of tiebreaking and show that it markedly impacts the model's expressivity.

We show \uhats with \emph{only leftmost} tiebreaking are strictly less expressive by relating them to a fragment of \ltlAcr{}, $\ptl$.
We do so by adapting \citeposs{yang2024maskedhardattentiontransformersrecognize} proofs: We describe a variant of the \brasp programming language defined therein, restricted to leftmost tiebreaking, and show it to be equivalent to $\ptl$.
Further, leveraging the recent results by \citet{jiaodaspaper} that characterize finite-precision future-masked \emph{soft}-attention transformers as equivalent to $\ptl$, we establish the equivalence of leftmost \uhats to standard softmax transformers.
Moreover, we give explicit $\ptl$ formulas and partially ordered finite-state automata (FSAs) describing leftmost \uhats.

\section{Transformer Idealization}
This section introduces the idealization of the transformer analyzed throughout the paper. 
\paragraph{Finite Precision.}
Implemented on modern hardware, computations performed by transformers rely on a fixed number of bits. This makes finite-precision a more realistic assumption than unbounded or growing (w.r.t. input length) precision \citep{barcelo2023logicallanguagesacceptedtransformer, Hahn_2020, hao2022formallanguagerecognitionhard, merrill2022saturatedtransformersconstantdepththreshold, merrill2023logicexpressinglogprecisiontransformers}.

\paragraph{No positional encodings.} 
We wish to study what we consider to be a \textit{barebone} idealization of the transformer, because this enables us to understand the exact expressive power of \uha. 
Moreover, finite-precision positional encodings correspond precisely to monadic predicates in \ltlAcr{} formulas \citep{yang2024maskedhardattentiontransformersrecognize}, yielding a predictable and well-understood extension to our analysis.

\paragraph{Unique hard attention.}
While the original transformer uses soft attention \citep{vaswani2023attentionneed}, theoretical work largely analyzes hard attention \citep[][\textit{inter alia}]{ merrill2022saturatedtransformersconstantdepththreshold, hao2022formallanguagerecognitionhard,yang2024maskedhardattentiontransformersrecognize, barcelo2023logicallanguagesacceptedtransformer}. 

The precise implications of this modeling decision are still unclear, but our work, combined with \citeposs{jiaodaspaper} results, reduces the gap between both models: Soft-attention transformers are equivalent to leftmost \uhats while rightmost \uhats are more expressive. 
We contextualize our findings with some more related results in \cref{tab:equivalence}.
\paragraph{Strict future masking.} 
Future masking is standard in transformer-based language models. 
We focus on \textit{strict} masking (where a position cannot attend to itself) as non-strict masking is known to reduce expressive power \citep{yang2024maskedhardattentiontransformersrecognize}.
Moreover, residual connections still allow the model to incorporate information vertically across layers.

\begin{table*}
    \footnotesize
    \centering
    \renewcommand{\arraystretch}{1.15} 
    \begin{tabular}{
        >{\centering\arraybackslash}p{1.4cm} 
        >{\centering\arraybackslash}p{1.8cm} 
        >{\centering\arraybackslash}p{1.1cm} 
        >{\centering\arraybackslash}p{1.75cm} 
        >{\centering\arraybackslash}p{1.75cm} 
        >{\centering\arraybackslash}p{1.55cm} 
        >{\centering\arraybackslash}p{3.75cm}
        }
        \toprule 
        Transformer & \textbf{LTL} & FO logic & Regex & Monoid & Automata & Note \\
        \midrule
        $\tfFR$, $\tfPL$ & $\ltl$ & $\fo$ & star-free & aperiodic & counter-free & \citet[][Thm. 5]{yang2024maskedhardattentiontransformersrecognize} \\
        \makecell{{\tiny future-masked} \\[-0.5ex] {\scriptsize \emph{soft} attention}} & $\ptl$ & $\pfo$ & $\gR$-expression & $\gR$-trivial & POFA & \citet{jiaodaspaper} \\
        $\tfFL$ & $\ptl$ & $\pfo$ & $\gR$-expression & $\gR$-trivial & POFA & \cref{thm:braspFL-to-ptl,thm:ptl-to-braspFL} \\
        $\tfPR$ & $\ftl$ & $\ffo$ & $\gL$-expression & $\gL$-trivial & RPOFA & \cref{thm:duality} \\ 
        \bottomrule
    \end{tabular}
    \caption{Known equivalences of finite-precision transformers with no positional encodings to different formalism. $\tfFR$ future-masked rightmost {\relsize{-0.75}{\textsf{UHAT}}}s. $\tfFL, \tfPL$, and $\tfPR$ are defined analogously for past masking and leftmost {\relsize{-0.75}{\textsf{UHA}}}.}
    \label{tab:equivalence}
\end{table*}

\section{The Best of {\relsize{-0.5}{\textsf{UHA}}}, The Worst of {\relsize{-0.5}{\textsf{UHA}}}} \label{sec:prologue}
This section provides a high-level overview,  intuition, roadmap, and key implications of our results.
\subsection{Separation}
We begin by building an intuition as to why, with future masking, \uha with leftmost tiebreaking $\leftmost$ is strictly less expressive than with rightmost tiebreaking $\rightmost$. 
We illustrate this in $\brasp$, a Boolean-valued programming language \citep{yang2024maskedhardattentiontransformersrecognize}, as the intermediary between \ltlAcr{} and \uhats. 
To follow the coming examples, we only need familiarity with the following attention operation:
\begin{equation}
    \textcolor{ETHPetrol}{\inductvec\mleft(\tstep\mright)} = \leftmost_{\tstep'} [\textcolor{ETHPurple}{\tstep' < \tstep}, \textcolor{ETHGreen}{\scorevec(\tstep')}] \; \textcolor{ETHRed}{\valuevec(\tstep')} : \textcolor{ETHBlue}{\defaultvec(\tstep)}
\end{equation}
$\leftmost_{\tstep'} [\textcolor{ETHPurple}{\tstep' < \tstep}, \textcolor{ETHGreen}{\scorevec(\tstep')}]$ denotes choosing (\emph{attending to}) the \emph{leftmost} ($\leftmost$) position $\tstep'$ for which $\textcolor{ETHPurple}{\tstep' < \tstep}$ holds (\underline{\relsize{-1}\textsf{F}}uture masking) and $\textcolor{ETHGreen}{\scorevec(\tstep')} = 1$.
If such a $\tstep'$ exists, $\textcolor{ETHPetrol}{\inductvec\mleft(\tstep\mright)}$ is assigned the value of the predicate $\textcolor{ETHRed}{\valuevec(\tstep')}$, and otherwise, it is assigned a default value $\textcolor{ETHBlue}{\defaultvec(\tstep)}$.
This emulates leftmost \uha, where $\textcolor{ETHPurple}{\tstep' < \tstep}$ corresponds to strict future masking, $\textcolor{ETHGreen}{\scorevec(\tstep')} = 1$ corresponds to maximizing the attention score, and $\textcolor{ETHRed}{\valuevec(\tstep')}$ corresponds to the value vector.
We define the rightmost operation $\rightmost$ analogously.

We now note two facts: 
\begin{enumerate}[label=\textit{(\roman*)}]
    \item Every leftmost operation $\leftmost$
    \begin{equation}
        \inductvec(\tstep)=\leftmost_{\tstep'} [\tstep'<\tstep, \scorevec(\tstep')]\; \valuevec(\tstep') : \defaultvec(\tstep)
    \end{equation}
    can be simulated by $\rightmost$ attentions in two steps. 
    First, we gather all positions $\tstep$ that have preceding positions $\tstep'<\tstep$ such that $\scorevec(\tstep')=1$:
    \begin{equation}
        \inductvec_1(\tstep) = \rightmost_{\tstep'} [\tstep'<\tstep,\scorevec(\tstep')] \;\scorevec(\tstep') : 0.
    \end{equation}
    Then, the leftmost position is just the single position $\tstep'$ with $\scorevec(\tstep')=1$ but $\inductvec_1(\tstep')=0$:
    \begin{equation}
        \inductvec_2(\tstep) = \rightmost_{\tstep'} [\tstep'<\tstep,\scorevec(\tstep')\land \lnot \inductvec_1(\tstep')] \; \valuevec(\tstep') : \defaultvec(\tstep).
    \end{equation}
    \item There exist operations that $\rightmost$ \emph{can} perform and $\leftmost$ \emph{cannot}.
    For instance, $\rightmost$ attention can read the value immediately to the left of the current position, i.e., $\valuevec(\tstep-1)$, as follows:
    \begin{equation}
        \inductvec(\tstep)=\rightmost_{\tstep'}[\tstep'<\tstep, 1] \; \valuevec(\tstep'):0,
    \end{equation}
    but $\leftmost$ attention cannot, as we would need $\tstep'=\tstep-1$ to be the only position with $\tstep'<\tstep$ and $\scorevec(\tstep')=1$ for all $\tstep \in[\strlen]$, which is impossible. 
\end{enumerate} 
This establishes a separation between $\braspFL$, which is limited to leftmost tiebreaking $\leftmost$ and future masking $\futuremask$, and the full $\brasp$, leading to the following: 

\begin{theorem}[Informal] \label{thm:informal-1}
    Finite-precision future-masked $\leftmost$ \uhats are \emph{weaker} than $\rightmost$ \uhats. 
\end{theorem}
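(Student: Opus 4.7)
The plan is to establish both directions needed for ``weaker''---namely $\tfFL \subseteq \tfFR$ and the strict separation $\tfFL \neq \tfFR$. For the containment, I would directly lift observation \textit{(i)} of \cref{sec:prologue} to the transformer level. Every $\leftmost$ attention operation in $\braspFL$ decomposes into two chained $\rightmost$ attentions in $\brasp$: first mark each position as having at least one strictly preceding score-$1$ site, then rightmost-attend to the unique non-marked score-$1$ position. Since $\brasp$ programs correspond exactly to finite-precision rightmost \uhats with strict future masking \citep{yang2024maskedhardattentiontransformersrecognize}, each leftmost head can be unrolled into two rightmost heads across an extra layer, preserving the predicate computed at every position. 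This yields $\tfFL \subseteq \tfFR$ at the language level.

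For strictness, the plan is to exhibit a witness language in $\tfFR \setminus \tfFL$. Observation \textit{(ii)} already points to the obstruction---reading $\valuevec(\tstep-1)$ via a single head---but promoting this to a language-level separation requires excluding multi-head and multi-layer simulations. The cleanest route is via the subsequent characterization theorems \cref{thm:braspFL-to-ptl,thm:ptl-to-braspFL}, which identify $\tfFL \equiv \ptl$, combined with $\tfFR \equiv \ltl$ from \citet{yang2024maskedhardattentiontransformersrecognize}. The classical strict inclusion $\ptl \subsetneq \ltl$ over finite words then supplies a concrete witness: any star-free language whose syntactic monoid is aperiodic but not $\gR$-trivial, equivalently, any language requiring a genuine future (Until) modality that cannot be rewritten using only Since and Previously.

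The hard part will be the strictness direction. Any attempt to argue non-simulability of $\valuevec(\tstep-1)$ directly inside $\braspFL$ must contend with the full expressive power of compositional leftmost attentions, Boolean combinations, and residual streams, so a purely ad hoc diagonalization would be brittle. This is exactly what the later logical characterizations sidestep, by reducing the question to a well-understood separation in temporal logic rather than reasoning about transformer internals. Consequently, \cref{thm:informal-1} is most naturally obtained as a corollary of the sharper \cref{thm:braspFL-to-ptl,thm:ptl-to-braspFL}, and the proof sketch above is really a roadmap to those results, with \textit{(i)} supplying the easy containment and the characterizations supplying the strictness.
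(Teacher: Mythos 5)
Your proposal matches the paper's own route: the easy direction is exactly the paper's observation \textit{(i)} (a $\leftmost$ head unrolled into two chained $\rightmost$ attentions, then transported to transformers via the \brasp{}--\uhat{} equivalence of \citet{yang2024maskedhardattentiontransformersrecognize}), and the strictness is obtained, as in the paper, not from the single-head obstruction \textit{(ii)} but from the characterizations $\tfFL \equiv \braspFL \equiv \ptl$ together with $\tfFR \equiv \ltl$ and the classical strict inclusion $\ptl \subsetneq \ltl$ (witnessed, e.g., by $\syma\kleene{\alphabet}\symb$). You also correctly identify that the informal theorem is a corollary of the sharper equivalence results rather than something proved by ad hoc reasoning about attention internals, which is precisely how the paper positions it.
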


\subsection{Characterizations}
In the remainder of the paper, we show that $\braspFL$ is equivalent to the fragment of \ltlAcr{} with only the $\past$ operator, denoted by $\ptl$, which in turn is equivalent to partially ordered FSAs (\pofaAcr{}). This provides an exact characterization of leftmost \uhats:
\begin{theorem}[Informal] \label{thm:informal-2}
    Finite-precision future-masked $\leftmost$ \uhats are equivalent to $\ptl$. 
\end{theorem}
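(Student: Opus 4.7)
The plan is to route the equivalence through the intermediate programming language $\braspFL$ introduced in \cref{sec:prologue}, reducing \cref{thm:informal-2} to two simulations: $\tfFL \equiv \braspFL$ on the one hand, and $\braspFL \equiv \ptl$ on the other (the content of Theorems \ref{thm:braspFL-to-ptl} and \ref{thm:ptl-to-braspFL}). This mirrors the overall strategy of \citet{yang2024maskedhardattentiontransformersrecognize} for the full $\brasp{}$ / $\ltl$ correspondence, but each step must be re-examined because dropping rightmost tiebreaking removes the simulation shortcuts used there.

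For $\tfFL \equiv \braspFL$, I would compile $\braspFL$ into $\tfFL$ directly: each Boolean predicate occupies a dimension of the residual stream as a one-hot flag, Boolean combinations are handled by the feedforward sublayer, and each operation $\leftmost_{\tstep'}[\tstep'<\tstep, \scorevec]\;\valuevec:\defaultvec$ becomes a single $\leftmost$-tiebreaking \uha head whose queries and keys make the attention score depend only on $\scorevec$, so the head selects the leftmost strictly-past position at which $\scorevec$ holds and reads off $\valuevec$ (falling back to $\defaultvec$ when no such position exists). The converse direction abstracts each \uha head back into such an operation; because tiebreaking is always leftmost, the extracted $\braspFL$ instruction is always legal, and no simulation of $\rightmost$ by $\leftmost$ is needed.

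For $\braspFL \equiv \ptl$, the embedding $\ptl \subseteq \braspFL$ is inductive on formulas: atomic propositions are input features, Boolean connectives are immediate, and $\past\,\varphi$ is realized by the single attention $\leftmost_{\tstep'}[\tstep'<\tstep, \varphi(\tstep')]\; 1 : 0$, which evaluates to $1$ iff some strictly earlier position satisfies $\varphi$. The converse direction---every $\braspFL$ predicate is $\ptl$-definable---is the main obstacle, since $\ptl$ has no operator that directly quantifies over the \emph{earliest} witness. The key observation that unlocks it is that the output of $\leftmost_{\tstep'}[\tstep'<\tstep, \scorevec]\;\valuevec:\defaultvec$ depends only on whether some strictly earlier position satisfies $\scorevec$ and on the value of $\valuevec$ at the earliest such position. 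Since ``some past position satisfies $\scorevec$'' is itself $\past$-expressible and monotone in time, the earliest $\scorevec$-witness is identifiable as a position where $\scorevec$ holds but $\past\,\scorevec$ does not, so the attention output is captured by the $\ptl$ formula
\[
\past\bigl(\scorevec \wedge \valuevec \wedge \lnot \past\,\scorevec\bigr) \vee \bigl(\lnot \past\,\scorevec \wedge \defaultvec\bigr).
\]
Inducting on the depth of the $\braspFL$ program yields a $\ptl$ formula for every computed predicate, and chaining with the $\tfFL \equiv \braspFL$ step closes the loop.
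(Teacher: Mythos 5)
Your proposal is correct and takes essentially the same route as the paper: it factors the equivalence through $\braspFL$ (invoking the leftmost special cases of \citet{yang2024maskedhardattentiontransformersrecognize}'s transformer--\brasp{} translations) and proves $\braspFL \equiv \ptl$ by induction on formulas and program vectors, with your key formula $\past\bigl(\scorevec \wedge \valuevec \wedge \lnot\past\,\scorevec\bigr) \vee \bigl(\lnot\past\,\scorevec \wedge \defaultvec\bigr)$ being exactly the paper's \cref{eq:leftmost}. The only detail you leave implicit is that attention scores and values may a priori depend on both positions and must first be normalized to unary predicates, which the paper handles by citing Lem.~12 and Prop.~11 of \citet{yang2024maskedhardattentiontransformersrecognize}.
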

In \cref{sec:tfFL-braspFL}, we formalize the theorem with proofs intentionally made analogous to \citeposs{yang2024maskedhardattentiontransformersrecognize}, in order to highlight the difference between $\leftmost$ and $\rightmost$. 
Additionally, in \cref{sec:direct-translations}, we provide alternative proofs that directly translate $\leftmost$ \uhats to $\ptl$ formulas and \pofaAcr{}.

\subsection{Implications}
Combining with \citeposs{jiaodaspaper} results that show that soft and average hard attention transformers\footnote{Average hard attention divides attention mass equally among positions maximizing the attention score.} are equivalent to $\ptl$ as well, we discover the peculiar fact that with fixed precision, soft attention, average hard attention, and leftmost \uha are all equally expressive. 

This insight could shed light on certain observed phenomena in soft-attention transformers. For instance, \citet{liu2023exposingattentionglitchesflipflop} find that transformers struggle with the flip-flop language, where the symbol following a ``read'' instruction must match the symbol following the most recent ``write'' instruction. Our results suggest that this difficulty arises because leftmost \uhats and thereby soft-attention transformers lack the ability to locate the most recent---rightmost---write instruction.

Furthermore, the fact that rightmost $\uha$ is strictly more expressive than other variants of transformers may partly explain the empirical success of positional encodings that bias attention toward recent tokens, such as ALiBi \citep{press2022train}, as they help approximate rightmost tiebreaking.

\section{Linear Temporal Logic} \label{sec:prelim}

An \defn{alphabet} $\alphabet$ is a finite, non-empty set of \defn{symbols}. 
The \defn{Kleene closure} of $\alphabet$ is $\kleene{\alphabet} = \bigcup_{n=0}^{\infty} \alphabet^n$, the set of all strings, where $\alphabet^0 \defeq \set{\eps}$ contains only the empty string. 
A \defn{language} $\lang$ is a subset of $\kleene{\alphabet}$.
We treat a \defn{language recognizer} as a function $\recognizer\colon \kleene{\alphabet} \to \{0, 1\}$ whose language is $\langFun{\recognizer} \defeq \set{\str \in \kleene{\alphabet} \mid \recognizerFun{\str} = 1}$.
Two recognizers $\recognizer_1$ and $\recognizer_2$ are \defn{equivalent} if and only if $\langFun{\recognizer_1} = \langFun{\recognizer_2}$.

\defn{Linear temporal logic} $\ltl$ is an extension of Boolean logic that considers events over time and can express time-dependent properties \citep{pnueli-LTL}. 
We define \ltlAcr{} over (finite) strings.
Formulas in $\ltl$ are composed of atomic formulas $\atom_\syma$ for $\syma\in\alphabet$, Boolean connectives $\land,\neg$,\footnote{$\lor$ can be defined using $\land$ and $\lnot$.} and four \defn{temporal operators} $\past$ (past), $\future$ (future), $\since$ (since) and $\until$ (until).
We denote by $\ptl$ the fragment with only the $\past$ operator and Boolean connectives and by $\ftl$ the fragment with only the $\future$ operator and Boolean connectives. 

Given a string $\str=\sym_1\cdots \sym_{\length}$, $\ltlAcr{}$ formulas are interpreted at some position $t\in \NTo{\length}$. 
We write $\str,t\models \tlf$ to denote that $\tlf$ is $\TRUE$ on $\str$ at position $t$. 
The semantics for $\ptl$ are:\footnote{We refer to \cref{app:preliminaries} for more details on $\ltl$ and its equivalent formalisms.}
\begin{itemize}[nosep,noitemsep,leftmargin=*]
    \item $\str,t\models \atom_\syma$ $\iff$ $\sym_t=\syma$;
    \item $\str,t\models \tlf_1 \land \tlf_2$ $\iff$ $\str,t\models \tlf_1 \land \str,t\models \tlf_2$;
    \item $\str,t\models \neg \tlf$ $\iff$ $\str,t \not\models \tlf$;
    \item $\str,t\models \past \tlf$ $\iff$ $\exists t' < t: \str, t'\models \tlf$.
    \item $\str,t\models \future \tlf$ $\iff$ $\exists t' > t: \str, t'\models \tlf$;
\end{itemize}
To define string acceptance, we denote by $\length+1$ a position outside of the string and define
\begin{equation}
    \label{eg:tl_acc}
    \str\models\tlf \Leftrightarrow \str,\length+1\models \tlf.
\end{equation} 

\section{\brasp{} and $\braspFL$}\label{sec:brasp}

We now introduce \brasp in more detail.
The input to a $\brasp$ program is a string $\str \in \kleene{\alphabet}$ with $|\str| = \strlen$. 
On such an input, a \defn{$\brasp$ program} computes a sequence of Boolean vectors of size $\strlen$, with entries indexed by $\tstep \in \NTo{\length}$ in parentheses, denoted as $\inductvec\mleft(\tstep\mright)$. 
Each $\sym \in \alphabet$ gives rise to an atomic Boolean vector $\atomvec_\sym$, defined as follows. For each $\tstep \in \NTo{\length}$:
\begin{equation}
    \atomvec_\sym(\tstep) = 1 \iff \str_\tstep = \sym
\end{equation}

To streamline notation, we denote the first $\nsymbols$ vectors of the program $\atomvec_\sym$ by $\inductvec_1, \cdots, \inductvec_{\nsymbols}$. 
The $(\idxi+1)$\textsuperscript{th} vector $\inductvec_{\idxi + 1} \defeq \inductvec'$ is computed inductively using one of the following operations:
\begin{enumerate}[nosep,noitemsep,label=\textit{(\arabic*)}]
    \item \textbf{Position-wise operation}: $\inductvec'\mleft(\tstep\mright)$ is a Boolean combination of zero or more of $\{\inductvec_{\idxi'}\mleft(\tstep\mright)\}_{\idxi' = 1}^{\idxi}$.
    \item \textbf{Attention operation}: $\inductvec'\mleft(\tstep\mright)$ can be one of:
    \begin{subequations}
        \begin{align}
            \inductvec'\mleft(\tstep\mright) & = \leftmost_{\tstep'} [\maskFun{\tstep, \tstep'}, \scorevec(\tstep, \tstep')] \; \valuevec(\tstep, \tstep') : \defaultvec(\tstep) \\
            \inductvec'\mleft(\tstep\mright) & = \rightmost_{\tstep'} [\maskFun{\tstep, \tstep'}, \scorevec(\tstep, \tstep')] \; \valuevec(\tstep, \tstep') : \defaultvec(\tstep)
        \end{align}
    \end{subequations}
    where:
    \begin{itemize}[nosep,noitemsep,leftmargin=*]
        \item The \defn{mask predicate} $\mask$ is defined as either $\maskFun{\tstep,\tstep'} \defeq \ind{\tstep' < \tstep}$ for \defn{strict future masking} ($\futuremask$), or $\maskFun{\tstep,\tstep'} \defeq \ind{\tstep' > \tstep}$ for \defn{strict past masking} ($\pastmask$).
        Notice that the inequalities are \emph{strict}, meaning the current position is excluded from attention.
        This detail has been shown to \emph{increase} expressivity compared to non-strict masking \citep{yang2024maskedhardattentiontransformersrecognize,jiaodaspaper}.\footnote{Such $\uhat$s can still access information at the current position via the residual connection; similarly, $\brasp$ programs can do so using the default predicate $\defaultvec$.}
        \item The \defn{score predicate} $\scorevec(\tstep, \tstep')$ is a Boolean combination of $\{\inductvec_{\idxi'}\mleft(\tstep\mright)\}_{\idxi' = 1}^{\idxi} \cup \{\inductvec_{\idxi'}\mleft(\tstep'\mright)\}_{\idxi' = 1}^{\idxi}$.
        \item the \defn{value predicate} $\valuevec(\tstep, \tstep')$ is defined analogously, and
        \item the \defn{default value predicate} $\defaultvec(\tstep)$ is a Boolean combination of values in $\{\inductvec_{1}\mleft(\tstep\mright), \ldots, \inductvec_{\idxi}\mleft(\tstep\mright)\}$.
    \end{itemize}
    We use $\leftmost$ to denote \defn{leftmost} tiebreaking and $\rightmost$ to denote \defn{rightmost} tiebreaking.
    For $\tstep \in \NTo{\strlen}$, define the set of valid positions as:
    \begin{equation}
        \maximizersFun{\tstep} \defeq \{\tstep' \in \NTo{\strlen} \mid \maskFun{\tstep, \tstep'} = 1 \text{ and } \scorevec(\tstep, \tstep') = 1\}.
    \end{equation}
    The unique position to attend to is then selected as:
    \begin{equation}
           \tstep^* \defeq 
        \begin{cases}
            \min \maximizersFun{\tstep} & \ifcondition \leftmost \\
            \max \maximizersFun{\tstep} & \ifcondition \rightmost
        \end{cases}. 
    \end{equation}
    Finally, the semantics of the attention operation are given by:
    \begin{equation}
        \inductvec'\mleft(\tstep\mright) \defeq 
        \begin{cases}
            \valuevec(\tstep, \tstep^*) & \ifcondition |\maximizersFun{\tstep}| > 0 \\
            \defaultvec(\tstep) & \otherwisecondition
        \end{cases}.
    \end{equation}
    Note that, by Lem. 12 and Prop. 11 of \citet{yang2024maskedhardattentiontransformersrecognize}, we can rewrite every $\braspprog$ to an equivalent program where every attention operation only uses unary scores and unary values, i.e., $\scorevec(\tstep,\tstep')$ and $\valuevec(\tstep,\tstep')$ depend only on $\tstep'$.
\end{enumerate} 

$\braspFL$ is the restricted version of $\brasp$ with only leftmost tiebreaking and future masking.

To define $\braspprog$'s language, we designate a final output vector $\outputvec$ and $\strlen$ as the output position such that $\outputvec(\strlen) = 1$ signals acceptance.

\brasp is equivalent to finite-precision future-masked rightmost \uhats, $\tfFR$ \citep[][Thms. 3 and 4]{yang2024maskedhardattentiontransformersrecognize}.
A similar claim, proved in \cref{app:proofs_brasp}, holds for $\braspFL$ and leftmost \uhats, $\tfFL$.
\begin{restatable}{theorem}{braspFLtfFLEquivalence} \label{thn:braspFL-tfFL-equivalence}
    For any \uhat in $\tfFL$, there exists an equivalent $\braspFL$ program.
    For any $\braspFL$ program, there exists an equivalent \uhat in $\tfFL$.
\end{restatable}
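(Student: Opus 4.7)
The plan is to adapt the proofs of Theorems 3 and 4 of \citet{yang2024maskedhardattentiontransformersrecognize}, which establish the analogous equivalence between the full $\brasp$ and future-masked $\rightmost$ $\uhats$, replacing every occurrence of $\rightmost$ with $\leftmost$ and verifying that each construction preserves the tiebreaking direction. Both directions follow the standard ``tabulate finite-precision configurations'' simulation paradigm; the only subtlety lies at the attention sub-layer.

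For the direction $\tfFL \Rightarrow \braspFL$, I would fix a finite-precision $\tfFL$ with $\nLayers$ layers and hidden dimension $\dimension$. Finite precision implies the set of hidden vectors achievable at each layer is a finite set $\set{\vv_1, \ldots, \vv_\setsize}$. For every layer $\layerIdx$ and every $\vv_j$ I introduce a Boolean predicate $\inductvec^{(\layerIdx)}_j(\tstep)$ that holds iff the hidden state at position $\tstep$ after layer $\layerIdx$ equals $\vv_j$. Position-wise components (embedding, FFN, residual) are implemented by Boolean combinations at the same position. For an attention sub-layer, since $\inner{\qq}{\kk}$ takes only finitely many values, ``$\tstep'$ attains the maximum score from $\tstep$'' can be written as a Boolean combination of the predicates $\inductvec^{(\layerIdx-1)}_{\cdot}(\tstep)$ and $\inductvec^{(\layerIdx-1)}_{\cdot}(\tstep')$, with auxiliary $\leftmost$ attentions computing the per-position maximum; a final $\leftmost$ attention with strict future mask then selects the leftmost such $\tstep'$ and copies the value predicate. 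The $\uhat$'s leftmost tiebreaking matches $\leftmost$ in $\braspFL$ on the nose, so no $\rightmost$ primitive is ever needed.

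For the reverse direction $\braspFL \Rightarrow \tfFL$, I would one-hot encode each Boolean vector of the program into two hidden coordinates. Position-wise Boolean operations are realized by a single FFN of sufficient width, and the residual stream propagates already-computed predicates across layers. For an attention line $\inductvec'(\tstep) = \leftmost_{\tstep'}[\tstep' < \tstep, \scorevec(\tstep')]\,\valuevec(\tstep') : \defaultvec(\tstep)$, after invoking Lem.~12 and Prop.~11 of \citet{yang2024maskedhardattentiontransformersrecognize} to reduce to unary scores and values, I would design $\qq$ and $\kk$ so that $\inner{\qq_\tstep}{\kk_{\tstep'}}$ is large when $\scorevec(\tstep') = 1$ and strictly smaller otherwise; the strict future mask enforces $\tstep' < \tstep$, and the $\uhat$'s leftmost tiebreaking selects exactly $\min \maximizersFun{\tstep}$. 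The empty-maximizer case is handled by a ``some valid source exists'' guard computed by another $\leftmost$ attention and combined through the residual with $\defaultvec(\tstep)$.

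The main obstacle is not mathematical depth but notational discipline: every attention in either construction must be verified to use only $\leftmost$ and strict future masking, since the very point of \cref{thm:informal-1} is that $\leftmost$ cannot simulate $\rightmost$. A subordinate technical point is that the binary-to-unary score/value reduction invoked in the second direction must itself preserve leftmost semantics; inspection of the relevant constructions in \citet{yang2024maskedhardattentiontransformersrecognize} shows they manipulate only masks and predicates and so transfer verbatim from $\rightmost$ to $\leftmost$.
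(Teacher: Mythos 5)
Your plan is correct and follows essentially the same route as the paper: the paper's proof simply observes that Theorems 3 and 4 of \citet{yang2024maskedhardattentiontransformersrecognize} (via their Lemmata 21 and 24) already treat $\leftmost$ and $\rightmost$ separately, so the translations restricted to $\leftmost$ and strict future masking are special cases of their constructions. Your more detailed sketch of both directions, including the observation that existence-style auxiliary attentions are tiebreak-insensitive and that the unary score/value reduction transfers verbatim, is exactly the verification the paper delegates to the cited proofs.
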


\section{$\braspFL$ Is Equivalent to $\ptl$} \label{sec:tfFL-braspFL}
We now formalize the equivalence of $\braspFL$ and $\ptl$. 
It rests on the following two theorems. The proofs are provided in \cref{app:proofs_brasp}. They are adapted from \citet{yang2024maskedhardattentiontransformersrecognize}, with the differing parts highlighted in \textcolor{ETHRed}{red}.
\begin{restatable}{theorem}{ptlToBraspFLTheorem} \label{thm:ptl-to-braspFL} 
For any formula $\tlf$ of $\ptl$, there is a $\braspFL$ program with a Boolean vector $\inductvec_\tlf$ such that, for any input $\str$ of length $\strlen$ and all $\tstep \in \NTo{\strlen}$, we have $\str,\tstep \models \tlf \iff \inductvec_\tlf(\tstep)=1$.
\end{restatable}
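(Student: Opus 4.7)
The plan is to proceed by structural induction on the formula $\tlf$ of $\ptl$, constructing at each stage an extension of the $\braspFL$ program containing a Boolean vector $\inductvec_\tlf$ whose value at every position $\tstep$ reflects the semantic value of $\tlf$ at $\tstep$.

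For the base case, any atomic formula $\atom_\syma$ with $\syma \in \alphabet$ corresponds directly to the built-in atomic vector $\atomvec_\syma$, since $\atomvec_\syma(\tstep) = 1 \iff \str_\tstep = \syma \iff \str, \tstep \models \atom_\syma$. For Boolean connectives, I would use single position-wise operations, setting $\inductvec_{\tlf_1 \land \tlf_2}(\tstep) \defeq \inductvec_{\tlf_1}(\tstep) \land \inductvec_{\tlf_2}(\tstep)$ and $\inductvec_{\neg \tlf}(\tstep) \defeq \lnot \inductvec_{\tlf}(\tstep)$; both are permitted $\brasp$ operations applied to the vectors supplied by the inductive hypothesis.

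The crux is the $\past$ operator. Given $\inductvec_\tlf$ from the inductive hypothesis, I would define $\inductvec_{\past \tlf}$ with a single leftmost attention operation under strict future masking:
\begin{equation}
    \inductvec_{\past \tlf}(\tstep) \defeq \leftmost_{\tstep'}\left[\tstep' < \tstep,\; \inductvec_\tlf(\tstep')\right]\; 1 : 0.
\end{equation}
The mask $\tstep' < \tstep$ is the strict future mask $\futuremask$ built into $\braspFL$, and the score is a unary Boolean predicate in $\tstep'$, as permitted by the syntax. The set of valid positions $\maximizersFun{\tstep} = \{\tstep' \in \NTo{\strlen} : \tstep' < \tstep \text{ and } \inductvec_\tlf(\tstep') = 1\}$ is nonempty if and only if there exists some $\tstep' < \tstep$ with $\str, \tstep' \models \tlf$, which by the semantics of $\past$ is exactly $\str, \tstep \models \past \tlf$. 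When this set is nonempty, the attention returns the constant value $1$; otherwise it returns the default $0$. Hence $\inductvec_{\past \tlf}(\tstep) = 1 \iff \str, \tstep \models \past \tlf$, completing the inductive step.

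There is no genuine obstacle in this direction: the $\past$ operator demands only the \emph{existence} of a preceding witness satisfying $\tlf$, not the identification of a specific one, so the choice of tiebreaker---$\leftmost$ here, though $\rightmost$ would serve equally well---is immaterial. The true expressive asymmetry between the two tiebreakers surfaces only in the reverse direction of the equivalence, where leftmost attention must be shown incapable of expressing tasks such as referring to the \emph{most recent} witness, which would correspond to a $\since$-like operator that $\ptl$ deliberately lacks.
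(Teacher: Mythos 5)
Your proof is correct and follows essentially the same route as the paper: structural induction with atomic vectors for atoms, position-wise operations for the Boolean connectives, and a single future-masked $\leftmost$ attention for $\past$. The only (immaterial) difference is that you return the constant value $1$ where the paper reuses $\inductvec_{\tlf_1}(\tstep')$ as the value predicate; these coincide at any attended position since the score predicate already forces $\inductvec_{\tlf_1}(\tstep')=1$.
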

\begin{restatable}{theorem}{braspFLToPtlTheorem} \label{thm:braspFL-to-ptl} 
    For any Boolean vector $\inductvec$ of a $\braspFL$ program $\braspprog$, there is a formula $\tlf_\inductvec$ of\/ $\ptl$ such that for any input $\str$ of length $\strlen$ and all $\tstep \in \NTo{\strlen}$, we have $\inductvec(\tstep)=1 \iff \str,\tstep \models \tlf_\inductvec$.
\end{restatable}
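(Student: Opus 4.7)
The plan is to proceed by induction on the construction order of the Boolean vectors $\inductvec_1, \ldots, \inductvec_\idxi$ in the $\braspFL$ program $\braspprog$. By the reduction afforded by Lem.\ 12 and Prop.\ 11 of \citet{yang2024maskedhardattentiontransformersrecognize}, I would first assume WLOG that every attention operation uses only unary scores and values, i.e., that $\scorevec$ and $\valuevec$ depend solely on the attended position $\tstep'$. For each $\inductvec_\idxi$, the goal is to exhibit a $\ptl$ formula $\tlf_{\inductvec_\idxi}$ satisfying $\inductvec_\idxi(\tstep) = 1 \iff \str, \tstep \models \tlf_{\inductvec_\idxi}$ for every input $\str$ and every $\tstep \in \NTo{\strlen}$.

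The base case is immediate: each atomic vector $\atomvec_\sym$ is captured by $\tlf_{\atomvec_\sym} \defeq \atom_\sym$. Position-wise operations are equally transparent, as a Boolean combination of the $\inductvec_{\idxi'}(\tstep)$ translates into the same Boolean combination of the inductively obtained $\tlf_{\inductvec_{\idxi'}}$, yielding a $\ptl$ formula with no temporal operator introduced.

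The heart of the proof is the attention step. For $\inductvec'(\tstep) = \leftmost_{\tstep'} [\tstep' < \tstep, \scorevec(\tstep')] \; \valuevec(\tstep') : \defaultvec(\tstep)$, let $\tlf_\scorevec$, $\tlf_\valuevec$, and $\tlf_\defaultvec$ be the $\ptl$ formulas obtained by the inductive hypothesis for the score, value, and default predicates. I would then set
\[
\tlf_{\inductvec'} \defeq \past\bigl(\tlf_\scorevec \land \neg \past \tlf_\scorevec \land \tlf_\valuevec\bigr) \lor \bigl(\neg \past \tlf_\scorevec \land \tlf_\defaultvec\bigr).
\]
The key observation is that when $\maximizersFun{\tstep} \neq \emptyset$, its minimum $\tstep^*$ coincides with the global minimum in $\NTo{\strlen}$ of positions satisfying $\scorevec$, and this unique position is characterized in $\ptl$ by $\tlf_\scorevec \land \neg \past \tlf_\scorevec$. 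A short case analysis on whether $\maximizersFun{\tstep}$ is empty then verifies that $\tlf_{\inductvec'}$ evaluates to $\valuevec(\tstep^*)$ in the non-empty case (via the first disjunct, with the second falsified by $\past \tlf_\scorevec$) and to $\defaultvec(\tstep)$ otherwise (vacuous first disjunct, second reduces to $\tlf_\defaultvec$), matching the semantics of $\inductvec'$.

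The genuine conceptual point---and the reason the analogous statement fails for rightmost tiebreaking---is the leftmost/rightmost asymmetry: ``the leftmost position where $\scorevec$ holds'' is expressible with $\past$ alone as $\scorevec \land \neg \past \scorevec$, whereas its rightmost counterpart would demand $\scorevec \land \neg \future \scorevec$, escaping $\ptl$. Beyond this, the only bookkeeping obstacle is tracking how $\scorevec$, $\valuevec$, and $\defaultvec$ depend on previously constructed vectors, which is precisely why the unary reduction is invoked upfront, so that the inductive hypothesis applies cleanly to each component predicate.
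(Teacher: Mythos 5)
Your proof is correct and follows essentially the same route as the paper's: the same induction over the program's vectors, the same appeal to the unary score/value reduction, and the identical formula $\past(\tlf_S \land \lnot\past\tlf_S \land \tlf_V) \lor (\lnot\past\tlf_S \land \tlf_D)$ for the leftmost attention case. Your added observation that the minimum of $\maximizersFun{\tstep}$ coincides with the globally leftmost position satisfying $\scorevec$ is exactly the justification the paper gives implicitly when it says $\tlf_S \land \lnot\past\tlf_S$ identifies the leftmost such position in the string.
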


\citet[][Thm.~15]{yang2024maskedhardattentiontransformersrecognize} establish an alternative proof of the equivalence between $\brasp$ and full $\ltl{}$ via counter-free automata, which recognize the class of star-free languages. Analogously, we demonstrate that $\braspFL$ corresponds to \pofaAcr{}s, a subclass of counter-free automata that characterize $\ptl{}$. A translation from \pofaAcr{}s to $\braspFL$ is provided in \cref{app:pofsa-to-transformer}.

\section{Direct Descriptions of $\tfFL$} \label{sec:direct-translations}
We now describe an alternative description of $\tfFL$ that \emph{directly} translates it to $\ptl$ and \pofaAcr{}s.

\subsection{Describing $\tfFL$ with $\ptl$}

In a transformer, the contextual representation at layer $\layerIdx$, $\vx_\tstep^{(\layerIdx)}$, determines a function that computes the next representation, $\vx_\tstep^{(\layerIdx+1)}$, given the unmasked symbols using the attention mechanism.
In \uhats, this function is particularly simple: It computes $\vx_\tstep^{(\layerIdx+1)}$ by \emph{selecting} the symbol with the highest attention score (as per the tiebreaking mechanism), $\vx^{(\layerIdx)}_{\tstep^*}$, and combines it with $\vx_\tstep^{(\layerIdx)}$ via the residual connection: $\vx_\tstep^{(\layerIdx+1)} = \vx_\tstep^{(\layerIdx)} + \vx^{(\layerIdx)}_{\tstep^*}$; see \cref{fig:unique-hard-attention}.
\begin{figure}
    
    \begin{tikzpicture}[
        node distance=1.2cm and 0.9cm, 
        every node/.style={align=center},
        tape/.style={rectangle, draw=ETHBlue!70, fill=ETHBlue!20, minimum height=0.65cm, minimum width=0.9cm},
        selected/.style={rectangle, draw=ETHBlue!70, fill=ETHBlue!40, minimum height=0.65cm, minimum width=0.9cm},
        rep/.style={rectangle, draw=ETHBlue!50, fill=ETHBlue!10, minimum height=0.75cm, minimum width=0.9cm},
        selected rep/.style={rectangle, draw=ETHBlue!50, fill=ETHBlue!25, minimum height=0.75cm, minimum width=0.9cm},
        head/.style={circle, draw=ETHGreen!70, fill=ETHGreen!50, text=white, minimum size=0.8cm},
        attn arrow/.style={-{Latex[length=1.75mm,width=1.25mm]}, thick, ETHGreen!95},
        comb arrow/.style={-{Latex[length=1.75mm,width=1.25mm]}, thick, ETHRed!70},
        combiner/.style={circle, draw=ETHRed!70, fill=ETHRed!50, text=white, minimum size=0.8cm}
        ]

        \node[rep] (hx1) at (0,0) {$\scriptstyle \vx_1^{(\ell)}$};
        \node[rep] (hx2) at (0.9,0) {$\scriptstyle \vx_2^{(\ell)}$};
        \node at (2,0) {$\cdots$};
        \node[rep] (hxts) at (3.1,0) {$\scriptstyle \vx_{\tstep^*}^{(\ell)}$};
        \node at (4.2,0) {$\cdots$};
        \node[rep] (hxtm1) at (5.3,0) {$\scriptstyle \vx_{\tstep-1}^{(\ell)}$};
        \node[selected rep] (hxt) at (6.2,0) {$\scriptstyle \vx_\tstep^{(\ell)}$};
        
        \node[head] (head) at (2.5,1.3) {$\scriptstyle \attn$};

        \draw[attn arrow, line width=1.2pt] (hxts.north) to[out=90,in=290] (head.south);
        \draw[attn arrow, opacity=0.2] (hxtm1.north) to[out=90,in=270] (head.south);
        \draw[attn arrow, opacity=0.2] (hx1.north) to[out=90,in=270] (head.south);
        \draw[attn arrow, opacity=0.2] (hx2.north) to[out=90,in=270] (head.south);
        
        \node[combiner] (comb) at (4.8,1.5) {$\scriptstyle +$};
        
        \draw[attn arrow] (head.east) to[out=0,in=180] (comb.west);
        \draw[comb arrow] (hxt.north) to[out=90,in=270] (comb.south);
        
        \node[selected rep] (output) at (6.2,1.5) {$\vx_\tstep^{(\layerIdx+1)}$};
        
        \draw[comb arrow] (comb.east) -- (output.west);
        
    \end{tikzpicture}
    
    \caption{Unique hard attention. $\textcolor{ETHGreen}{\vx_{\tstep^*}^{(\layerIdx)}}$ is \textcolor{ETHRed}{combined} with $\textcolor{ETHBlue}{\vx_\tstep^{(\layerIdx)}}$ to compute $\textcolor{ETHBlue}{\vx_\tstep^{(\layerIdx+1)} = \vx_\tstep^{(\layerIdx)}+ \vx_{\tstep^*}^{(\layerIdx)}}$.}
    \label{fig:unique-hard-attention}
\end{figure}

This invites the interpretation of transformer layers collecting progressively richer representations of individual symbols by selecting a new representation to append at each layer. 
We translate this idea into a set of $\ptl$ formulas of the form $\formulaAdd{\layerIdx}{\vx_\tstep^{(\layerIdx)}}{\vx_{\tstep^*}^{(\layerIdx)}}$ that keep track of the fact that the representation $\vx_\tstep^{(\layerIdx)}$ was updated with the representation $\vx_{\tstep^*}^{(\layerIdx)}$ at layer $\layerIdx$.
The full formula is presented in the proof of the following theorem in \cref{subsec:proofs_tffl_as_ptl}.
\begin{restatable}{theorem}{transformerToLTLTheorem} \label{thm:transformer-to-ltl}
    Let $\tf \in \tfFL$ be a transformer. Then, there exists an equivalent formula $\tlf \in \ptl$.
\end{restatable}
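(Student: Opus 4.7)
The plan is to proceed by induction on the number of layers, exploiting finite precision to keep the set of possible representations finite at every layer, and encoding the leftmost-max-score selection made by each attention head using nested $\past$ operators.

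Because the precision is fixed and there are no positional encodings, for each layer $\layerIdx \in \{0,\ldots,\nLayers\}$ the set $V_\layerIdx$ of values $\vx_\tstep^{(\layerIdx)}$ can take is finite and independent of $\tstep$. I will construct, by induction on $\layerIdx$, a family of $\ptl$ formulas $\{\phi_v^{(\layerIdx)}\}_{v \in V_\layerIdx}$ such that $\str,\tstep \models \phi_v^{(\layerIdx)} \iff \vx_\tstep^{(\layerIdx)} = v$ for every $\str$ and every $\tstep \in \NTo{\length}$. The base case is immediate since $\vx_\tstep^{(0)}$ is determined by $\sym_\tstep$: I take $\phi_v^{(0)} \defeq \bigvee_{\sym \in \alphabet :\, \embedFunc(\sym)=v} \atom_\sym$.

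For the inductive step, fix a head and suppose $\vx_\tstep^{(\layerIdx)} = u$. Since the attention score $s(u,w)$ depends only on the pair of representations, I partition $V_\layerIdx$ into score-equivalence groups $G_1(u) \succ G_2(u) \succ \cdots$ in strictly decreasing order of $s(u,\cdot)$. The attended value $w = \vx_{\tstep^\ast}^{(\layerIdx)}$ is characterized by (i) the group $G_{j(u,w)}(u)$ being the earliest non-empty group in the past, and (ii) $w$ being the representation at the leftmost past position of that group. Abbreviating $\phi_G^{(\layerIdx)} \defeq \bigvee_{w \in G} \phi_w^{(\layerIdx)}$, both conditions are captured by the pair-tracking formula
\begin{align*}
    \formulaAdd{\layerIdx}{u}{w} \defeq {} & \phi_u^{(\layerIdx)} \land \bigwedge_{i<j(u,w)} \neg \past \phi_{G_i(u)}^{(\layerIdx)} \\
    & {}\land \past\bigl(\phi_w^{(\layerIdx)} \land \neg \past \phi_{G_{j(u,w)}(u)}^{(\layerIdx)}\bigr).
\end{align*}
The first conjunction rules out past witnesses in strictly better groups, and the inner $\past(\cdots)$ isolates $w$ as the leftmost past occurrence in the best non-empty group. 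With an extra disjunct for $\tstep=1$ (no past, attention falls back to a default) and by composing with the deterministic pointwise residual-plus-feed-forward-plus-layer-norm update $F_\layerIdx$ and extending in the obvious way across multiple heads, I set $\phi_v^{(\layerIdx+1)} \defeq \bigvee_{(u,w):\, F_\layerIdx(u,w) = v} \formulaAdd{\layerIdx}{u}{w}$.

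Finally, the transformer accepts $\str$ iff $\vx_\strlen^{(\nLayers)}$ lies in a designated finite set $V_{\text{acc}}$; combining the $\phi_v^{(\nLayers)}$ for $v \in V_{\text{acc}}$ under the $\ptl$ acceptance convention at position $\length+1$---mirroring the acceptance step of the $\braspFL \to \ptl$ direction of \cref{thm:braspFL-to-ptl}---yields the desired $\tlf \in \ptl$. I expect the main obstacle to be verifying that $\formulaAdd{\layerIdx}{u}{w}$ faithfully captures leftmost-max-score semantics in the presence of arbitrary ties between distinct representations. The subtle point is that the inner $\past\bigl(\phi_w^{(\layerIdx)} \land \neg \past \phi_{G_j(u)}^{(\layerIdx)}\bigr)$ pattern names the \emph{first} witness in the best non-empty group using a single nested $\past$ and no future-looking operator---precisely the asymmetry separating $\tfFL$ from $\tfPR$ in \cref{tab:equivalence}, and the only place where leftmost tiebreaking is used essentially.
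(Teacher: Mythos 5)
Your proposal is correct and follows essentially the same route as the paper's proof: a layer-by-layer induction over the finite set of contextual representations, with the attended value pinned down by a nested-$\past$ formula saying ``no strictly better-scored representation occurs in the past, and there is a past occurrence of $w$ with no equally-scored representation before it,'' exactly the structure of the paper's $\formulaAdd{\layerIdx}{\vx}{\vy}$ and $\bestsymbol$ formulas, followed by a final disjunction over accepting representations. The remaining differences are bookkeeping: you track abstract finite values $\phi_v^{(\layerIdx)}$ grouped into score-equivalence classes and fold the position-wise updates into a finite map $F_\layerIdx$, whereas the paper encodes representations as tuples in $\alphabet^{2^{\layerIdx}}$ and additionally enumerates ordered subsets of past representations via $\actualOrder$ (a step your construction shows is not essential), while you are slightly more explicit about the empty-past default case.
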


\subsection{Describing $\tfFL$ with \pofaAcr{}s}
To provide an automata-theoretic take on the result, we directly express $\tfFL$ transformers with \pofaAcr{}s in the proof of the following theorem in \cref{app:transformer-to-pofsa}.
\begin{restatable}{theorem}{transformerToPOFATheorem} 
    Let $\tf \in \tfFL$ be a transformer.
    Then, there exists an equivalent \pofaAcr{}.
\end{restatable}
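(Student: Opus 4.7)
The plan is to give a direct construction of an equivalent \pofaAcr{} from a $\tf \in \tfFL$, paralleling the direct $\tf$-to-$\ptl$ translation of \cref{thm:transformer-to-ltl} in the previous subsection. A shorter \emph{compositional} argument is also available: by \cref{thm:transformer-to-ltl}, $\tf$ is equivalent to some $\ptl$ formula, and the equivalence $\ptl \equiv \pofaAcr{}$ recorded in \cref{tab:equivalence} then yields the claim. I would, however, present the direct construction as the main proof, both for clarity and because it mirrors the structure of the previous subsection.

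For the direct construction, the plan is to exploit finite precision so that each representation $\vx_\tstep^{(\layerIdx)}$ lives in a finite set $V_\layerIdx$. Reading the input left to right, I would let the \pofaAcr{} state after consuming $\sym_1\cdots\sym_\tstep$ record, for every layer $\layerIdx \in \NTo{\nLayers}$ and every (unary) score predicate $\scorevec$ usable at layer $\layerIdx{+}1$, the \emph{leftmost} position $\tstep' \le \tstep$ whose layer-$\layerIdx$ representation satisfies $\scorevec$ (together with the value predicate's value at that position), or a distinguished symbol $\bot$ if no such $\tstep'$ exists. Since there are only finitely many layers, finitely many representations per layer, and by Lem.\ 12 and Prop.\ 11 of \citet{yang2024maskedhardattentiontransformersrecognize} only finitely many unary score and value predicates to track, this state space is finite.

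Next, the plan is to define the partial order $q \relation q'$ by declaring that each $\bot$-entry of $q$ may be replaced by a concrete witness in $q'$, while every non-$\bot$ entry must be preserved verbatim. I would then verify that every transition on reading $\sym_{\tstep+1}$ moves the state monotonically upward in this order: for each layer/predicate, either the recorded leftmost witness remains leftmost, or a previously $\bot$ entry is upgraded to a fresh witness located at position $\tstep{+}1$. This \emph{stability of earlier witnesses} is precisely what the combination of \emph{leftmost} tiebreaking and \emph{strict future} masking buys us, since the layer-$\layerIdx$ representation at any fixed position depends only on symbols strictly preceding it and hence cannot change as more of the string is revealed. Acceptance is decided by simulating the attention operations at the virtual position $\length{+}1$ as prescribed by \cref{eg:tl_acc}.

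The main obstacle will be propagating this argument inductively through the layers: the score and value predicates relevant at layer $\layerIdx{+}1$ are Boolean combinations of layer-$\layerIdx$ representations, which are themselves the output of leftmost attention driven by layer-$(\layerIdx{-}1)$ predicates. I must therefore close the tracked set of predicates under these operations and show that, given the \pofaAcr{} state upon reading $\sym_\tstep$, the layer-$\layerIdx$ representation $\vx_\tstep^{(\layerIdx)}$ is uniquely determined for every $\layerIdx$. The delicate point is to check that the position selected by leftmost attention at layer $\layerIdx{+}1$ inside the transformer coincides with the leftmost witness stored in the \pofaAcr{} state, which reduces to the stability argument above applied layer by layer.
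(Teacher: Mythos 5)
Your proposal is correct in substance and follows the same high-level strategy as the paper's proof---a direct, one-pass simulation of $\tf$ by a finite automaton whose state only accumulates information monotonically, which is exactly what yields the partial order---but the state abstraction differs. The paper's automaton stores, for every layer $\layerIdx$, the \emph{ordered subset} of contextual representations seen so far (first occurrences, in order), i.e., an element of $\orderedSubsets{\alphabet^{2^{\layerIdx}}}$; by the attention invariance of \cref{lem:dependence} this single, predicate-independent object determines the leftmost-\uha output for \emph{every} possible query, and the update in \cref{eq:tf-pofa-update} recomputes the current position's representation layer by layer. You instead store precomputed answers: for a closed finite family of score/value queries per layer, whether a witness exists and the content at the leftmost witness. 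Both abstractions are finite (via \cref{lem:number-of-representations-per-layer}), and both hinge on the two facts you identify---representations at fixed positions are frozen under strict future masking, and a leftmost witness never moves once set---so your monotonicity argument for the partial order is the paper's argument phrased entrywise rather than via growing ordered sets. The paper's choice avoids the closure bookkeeping you flag in your last paragraph; yours gives leaner states but must carry that closure argument explicitly. Your compositional shortcut (\cref{thm:transformer-to-ltl} combined with \cref{thm:rtrivial}) is also a valid, if indirect, proof of the statement.

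Two points to fix in the write-up. First, the state must record only the \emph{existence} of a witness and the (finite-valued) information at the leftmost witness, not the witness's position index: as literally written, ``the leftmost position $\tstep' \le \tstep$'' would make the state space infinite, although your use of $\bot$ and your finiteness count suggest you intend the former. Second, unary Boolean score and value predicates (Lem.~12 and Prop.~11 of \citet{yang2024maskedhardattentiontransformersrecognize}) are a \brasp{} notion, not part of the transformer normal form; either run your construction on the equivalent $\braspFL$ program obtained from \cref{thn:braspFL-tfFL-equivalence}, or replace the tracked predicates by, for each query value and each achievable score level, the leftmost position attaining that level (and then attend at the highest realized level), which is what the real-valued scoring function of the normal form requires.
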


\section{Discussion and Conclusion}
We establish the equivalence of future-masked finite-precision leftmost \uhats with no positional encodings, $\tfFL$, to a fragment of linear temporal logic, $\ptl$.
Together with \citeauthor{yang2024maskedhardattentiontransformersrecognize}'s and \citeauthor{jiaodaspaper}'s results, this largely completes the picture of finite-precision transformer expressivity.

\paragraph{Equivalence to soft-attention transformers.}
\Cref{sec:tfFL-braspFL} not only provides a characterization of $\tfFL$ in terms of $\ptl$, but also establishes its equivalence to future-masked, finite-precision \emph{soft}-attention transformers, in conjunction with the results of \citeposs{jiaodaspaper} (summarized in \cref{tab:equivalence}).
This equivalence yields a compelling interpretation of $\tfFL$ as a principled abstraction of soft-attention transformers---one that is more appropriate than $\tfFR$ due to the expressivity gap between soft attention and rightmost \uha. Consequently, this motivates further investigation of $\tfFL$ as a simplified yet faithful analog of more complex soft-attention architectures.

\paragraph{Dot-depth hierarchy.} 
The \defn{dot-depth hierarchy} \citep{COHEN19711} classifies star-free languages based on the minimal alternation depth of \emph{concatenation} and \emph{Boolean operations} in the regular expressions defining them. 
The hierarchy is infinite and reflects increasing expressive power. \citet{BRZOZOWSKI198032} show that the class of languages definable in $\ptl$ forms a strict subclass of dot-depth 2, while being incomparable to dot-depth 1.
In a related line of work, \citet{bhattamishra-etal-2020-ability} empirically find that transformers struggle to generalize to star-free languages with dot-depth greater than 1.

\paragraph{Until Hierarchy.}
An alternative (infinite) hierarchy spanning the star-free languages is that of \defn{until hierarchy} \citep{etessamiHierarchyTemporalLogic1996,therienTemporalLogicSemidirect2001}, which stratifies the family according to the required number of $\until$ (or equivalently, $\since$) operations in an \ltlAcr{} formula required to define a language.
Our and \citeposs{jiaodaspaper} results naturally place leftmost \uha and soft-attention transformers within the 0\textsuperscript{th} layer of this hierarchy.

\paragraph{Abilities and Limitations.}
Exact characterizations of transformers allow us to derive precise conclusions about models' abilities and limitations.
Our results, in particular, mean that $\tfFL$ transformers, like their soft-attention counterparts, \emph{cannot} model simple languages such as 
\begin{enumerate*}[label=\textit{(\roman*)},mode=unboxed]
    \item strictly local languages and $n$-gram models, which have been linked to infinite-precision \uhats \citep{svete-cotterell-2024-transformers};
    \item locally testable languages (which require detecting contiguous substrings);
    \item languages of nested parentheses of bounded depth (bounded Dyck languages) which have also been linked to infinite-precision transformers \citep{yao-etal-2021-self}; and
    \item any non-star-free languages such as \textsc{Parity}.
\end{enumerate*}
In contrast, the equivalence to $\ptl$ means that $\tfFL$ \emph{can} model simple languages such as those whose string membership depends on the presence of not necessarily contiguous subsequences or on the string prefix.
\citet{jiaodaspaper} find strong empirical evidence that this theoretical equivalence faithfully translates into the practical performance of trained transformers on formal languages.

\paragraph{A duality.}
We finally note that \emph{past} masking and \emph{rightmost} \uha has a natural characterization with $\ftl$, the fragment of \ltlAcr{} with only the $\future$ operator. 
This duality is summarized in \cref{tab:equivalence} and elaborated on in \cref{app:duality}.

\section*{Limitations}
We limit ourselves to a purely theoretical investigation of the expressivity of a particular model of a transformer.
In particular, our results hold for finite-precision transformers with leftmost hard attention and no positional encodings.
This is important to consider as the representational capacity of transformers depends on the choices made for all these components.
We do not consider learnability and training, which are opening up to be an exciting area of study and promise to bring our understanding of the representational capacity of transformers closer to what we observe in practice \citep{hahn-rofin-2024-sensitive}.
In fact, due to the discrete nature of the hard-attention mechanism, training $\uhats$ in practice is infeasible. 
Nevertheless, recent work shows how temperature scaling and unbounded positional encodings can be used to simulate hard attention with soft attention \citep{yang2024simulatinghardattentionusing}. 
We leave the empirical investigation of contrasting the performance of $\tfFL, \tfFR, \tfPR$ on various language classes to future work.

\section*{Ethical considerations}
We used AI-based tools (ChatGPT and GitHub Copilot) for writing assistance. 
We used the tools in compliance with the ACL Policy on the Use of AI Writing Assistance.

\section*{Acknowledgements}
We would like to thank Andy Yang and Michael Hahn for useful discussions and feedback on early versions of this work.
Anej Svete and Jiaoda Li are supported by the ETH AI Center Doctoral Fellowship.

\bibliography{anthology,custom}

@inproceedings{
yang2024maskedhardattentiontransformersrecognize,
title={Masked Hard-Attention Transformers Recognize Exactly the Star-Free Languages},
author={Andy Yang and David Chiang and Dana Angluin},
booktitle={The Thirty-eighth Annual Conference on Neural Information Processing Systems},
year={2024},
url={https://openreview.net/forum?id=FBMsBdH0yz}
}

@article{BRZOZOWSKI198032,
title = {Languages of {R}-trivial monoids},
journal = {Journal of Computer and System Sciences},
volume = {20},
number = {1},
pages = {32-49},
year = {1980},
issn = {0022-0000},
doi = {https://doi.org/10.1016/0022-0000(80)90003-3},
url = {https://www.sciencedirect.com/science/article/pii/0022000080900033},
author = {Janusz Antoni Brzozowski and Faith Ellen}
}

@article{krdecomp,
author = {Krohn, Kenneth and Rhodes, John},
year = {1965},
month = {04},
pages = {},
title = {Algebraic Theory of Machines. {I}. {P}rime Decomposition Theorem for Finite Semigroups and Machines},
volume = {116},
journal = {Transactions of The American Mathematical Society - TRANS AMER MATH SOC},
doi = {10.2307/1994127}
}

@phdthesis{Kamp1968-KAMTLA,
	author = {Hans Kamp},
	school = {{UCLA}},
	title = {Tense Logic and the Theory of Linear Order},
	year = {1968},
    url = {https://www.proquest.com/openview/408039eb4ed228dc4cba3fe7e1774163/1?pq-origsite=gscholar&cbl=18750&diss=y}
}

@inproceedings{gabbay, author = {Gabbay, Dov and Pnueli, Amir and Shelah, Saharon and Stavi, Jonathan}, 
title = {On the temporal analysis of fairness}, 
year = {1980}, 
isbn = {0897910117}, 
publisher = {Association for Computing Machinery}, address = {New York, NY, USA}, 
url = {https://doi.org/10.1145/567446.567462}, 
doi = {10.1145/567446.567462}, booktitle = {Proceedings of the 7th ACM SIGPLAN-SIGACT Symposium on Principles of Programming Languages}, 
pages = {163–173}, 
numpages = {11}, 
location = {Las Vegas, Nevada}, 
series = {POPL '80}}

@inproceedings{pnueli-LTL,
  author={Pnueli, Amir},
  booktitle={18th Annual Symposium on Foundations of Computer Science (sfcs 1977)}, 
  title={The temporal logic of programs}, 
  year={1977},
  volume={},
  number={},
  pages={46-57},
  keywords={Logic;Operating systems;Real time systems;Reasoning about programs;Stress;Power system modeling;Clocks;Programming profession;Safety;System recovery},
  doi={10.1109/SFCS.1977.32},
  url ={https://ieeexplore.ieee.org/document/4567924}
}

@book{mcnaughton1971counter,
  title={Counter-Free Automata},
  author={McNaughton, Robert and Papert, Seymour},
  isbn={9780262130769},
  lccn={71153294},
  series={M.I.T. Press research monographs},
  url={https://mitpress.mit.edu/9780262130769/counter-free-automata/},
  year={1971},
  publisher={M.I.T. Press}
}

@inproceedings{vaswani2023attentionneed,
  author    = {Vaswani, Ashish and Shazeer, Noam and Parmar, Niki and Uszkoreit, Jakob and Jones, Llion and Gomez, Aidan N. and Kaiser, {\L}ukasz and Polosukhin, Illia},
  booktitle = {Advances in Neural Information Processing Systems},
  editor    = {I. Guyon and U. Von Luxburg and S. Bengio and H. Wallach and R. Fergus and S. Vishwanathan and R. Garnett},
  pages     = {},
  publisher = {Curran Associates, Inc.},
  title     = {Attention is All you Need},
  url       = {https://proceedings.neurips.cc/paper_files/paper/2017/file/3f5ee243547dee91fbd053c1c4a845aa-Paper.pdf},
  volume    = {30},
  year      = {2017}
}

@article{Strobl_2024,
   title={What Formal Languages Can Transformers Express? A Survey},
   volume={12},
   ISSN={2307-387X},
   url={http://dx.doi.org/10.1162/tacl_a_00663},
   DOI={10.1162/tacl_a_00663},
   journal={Transactions of the Association for Computational Linguistics},
   publisher={MIT Press},
   author={Strobl, Lena and Merrill, William and Weiss, Gail and Chiang, David and Angluin, Dana},
   year={2024},
   pages={543–561} }

@article{Hahn_2020,
   title={Theoretical Limitations of Self-Attention in Neural Sequence Models},
   volume={8},
   ISSN={2307-387X},
   url={http://dx.doi.org/10.1162/tacl_a_00306},
   DOI={10.1162/tacl_a_00306},
   journal={Transactions of the Association for Computational Linguistics},
   publisher={MIT Press - Journals},
   author={Hahn, Michael},
   year={2020},
   month=dec, pages={156–171} }

@article{hao2022formallanguagerecognitionhard,
    title = "Formal Language Recognition by Hard Attention Transformers: Perspectives from Circuit Complexity",
    author = "Hao, Yiding  and
      Angluin, Dana  and
      Frank, Robert",
    editor = "Roark, Brian  and
      Nenkova, Ani",
    journal = "Transactions of the Association for Computational Linguistics",
    volume = "10",
    year = "2022",
    address = "Cambridge, MA",
    publisher = "MIT Press",
    url = "https://aclanthology.org/2022.tacl-1.46/",
    doi = "10.1162/tacl_a_00490",
    pages = "800--810",
}

@article{merrill2022saturatedtransformersconstantdepththreshold,
    title = "{Saturated transformers are constant-depth threshold circuits}",
    author = "Merrill, William  and
      Sabharwal, Ashish  and
      Smith, Noah A.",
    editor = "Roark, Brian  and
      Nenkova, Ani",
    journal = "Transactions of the Association for Computational Linguistics",
    volume = "10",
    year = "2022",
    address = "Cambridge, MA",
    publisher = "MIT Press",
    url = "https://aclanthology.org/2022.tacl-1.49/",
    doi = "10.1162/tacl_a_00493",
    pages = "843--856"
}

@InProceedings{hessedivision,
author="Hesse, William",
editor="Orejas, Fernando
and Spirakis, Paul G.
and van Leeuwen, Jan",
title="{Division is in uniform {TC0}}",
booktitle="Automata, Languages and Programming",
year="2001",
publisher="Springer Berlin Heidelberg",
address="Berlin, Heidelberg",
pages="104--114",
isbn="978-3-540-48224-6",
url={https://link.springer.com/chapter/10.1007/3-540-48224-5_9#citeas}
}

@article{chandra,
author = {Chandra, Ashok K. and Stockmeyer, Larry and Vishkin, Uzi},
title = {Constant Depth Reducibility},
journal = {SIAM Journal on Computing},
volume = {13},
number = {2},
pages = {423-439},
year = {1984},
doi = {10.1137/0213028},
URL = {https://doi.org/10.1137/0213028}
}

@article{jiaodaspaper,
      title={Characterizing the Expressivity of Transformer Language Models}, 
      author={Jiaoda Li and Ryan Cotterell},
      year={2025},
      eprint={2505.23623},
      archivePrefix={arXiv},
      primaryClass={cs.CL},
      url={https://arxiv.org/abs/2505.23623},
    journal={arXiv preprint arXiv:2505.23623}
}

@inproceedings{liu2023exposingattentionglitchesflipflop, 
author = {Liu, Bingbin and Ash, Jordan T. and Goel, Surbhi and Krishnamurthy, Akshay and Zhang, Cyril}, 
title = {Exposing attention glitches with flip-flop language modeling}, 
year = {2023}, 
publisher = {Curran Associates Inc.}, 
address = {Red Hook, NY, USA}, 
booktitle = {Proceedings of the 37th International Conference on Neural Information Processing Systems}, 
articleno = {1112}, 
numpages = {35}, 
location = {New Orleans, LA, USA}, 
series = {NIPS '23} }

@inproceedings{
barcelo2023logicallanguagesacceptedtransformer,
title={Logical Languages Accepted by Transformer Encoders with Hard Attention},
author={Pablo Barcel{\'o} and Alexander Kozachinskiy and Anthony Widjaja Lin and Vladimir Podolskii},
booktitle={The Twelfth International Conference on Learning Representations},
year={2024},
url={https://openreview.net/forum?id=gbrHZq07mq}
}

@inproceedings{
merrill2023logicexpressinglogprecisiontransformers,
title={A Logic for Expressing Log-Precision Transformers},
author={William Merrill and Ashish Sabharwal},
booktitle={Thirty-seventh Conference on Neural Information Processing Systems},
year={2023},
url={https://openreview.net/forum?id=uR8TtWCIsr}
}

@inproceedings{chiang2023tighterboundsexpressivitytransformer, author = {Chiang, David and Cholak, Peter and Pillay, Anand}, title = {Tighter bounds on the expressivity of transformer encoders}, year = {2023}, publisher = {JMLR.org}, booktitle = {Proceedings of the 40th International Conference on Machine Learning}, articleno = {221}, numpages = {19}, location = {Honolulu, Hawaii, USA}, series = {ICML'23}, url = {https://proceedings.mlr.press/v202/chiang23a}}

@article{strobl2023averagehardattentiontransformersconstantdepth,
      title={Average-Hard Attention Transformers are Constant-Depth Uniform Threshold Circuits}, 
      author={Lena Strobl},
      year={2023},
      eprint={2308.03212},
      archivePrefix={arXiv},
      primaryClass={cs.CL},
      url={https://arxiv.org/abs/2308.03212}, 
journal = {arXiv preprint arXiv:2308.03212}
}

@inproceedings{hahn-rofin-2024-sensitive,
    title = "Why are Sensitive Functions Hard for Transformers?",
    author = "Hahn, Michael  and
      Rofin, Mark",
    editor = "Ku, Lun-Wei  and
      Martins, Andre  and
      Srikumar, Vivek",
    booktitle = "Proceedings of the 62nd Annual Meeting of the Association for Computational Linguistics (Volume 1: Long Papers)",
    month = aug,
    year = "2024",
    address = "Bangkok, Thailand",
    publisher = "Association for Computational Linguistics",
    url = "https://aclanthology.org/2024.acl-long.800/",
    doi = "10.18653/v1/2024.acl-long.800",
    pages = "14973--15008"
}

@inproceedings{svete-cotterell-2024-transformers,
    title = "Transformers Can Represent $n$-gram Language Models",
    author = "Svete, Anej  and
      Cotterell, Ryan",
    editor = "Duh, Kevin  and
      Gomez, Helena  and
      Bethard, Steven",
    booktitle = "Proceedings of the 2024 Conference of the North American Chapter of the Association for Computational Linguistics: Human Language Technologies (Volume 1: Long Papers)",
    month = jun,
    year = "2024",
    address = "Mexico City, Mexico",
    publisher = "Association for Computational Linguistics",
    url = "https://aclanthology.org/2024.naacl-long.381/",
    doi = "10.18653/v1/2024.naacl-long.381",
    pages = "6845--6881",
}

@inproceedings{nowak-etal-2024-representational,
    title = "On the Representational Capacity of Neural Language Models with Chain-of-Thought Reasoning",
    author = "Nowak, Franz  and
      Svete, Anej  and
      Butoi, Alexandra  and
      Cotterell, Ryan",
    editor = "Ku, Lun-Wei  and
      Martins, Andre  and
      Srikumar, Vivek",
    booktitle = "Proceedings of the 62nd Annual Meeting of the Association for Computational Linguistics (Volume 1: Long Papers)",
    month = aug,
    year = "2024",
    address = "Bangkok, Thailand",
    publisher = "Association for Computational Linguistics",
    doi = "10.18653/v1/2024.acl-long.676",
    pages = "12510--12548"
}

@inproceedings{
press2022train,
title={Train Short, Test Long: Attention with Linear Biases Enables Input Length Extrapolation},
author={Ofir Press and Noah Smith and Mike Lewis},
booktitle={International Conference on Learning Representations},
year={2022},
url={https://openreview.net/forum?id=R8sQPpGCv0}
}

@article{JMLR:v22:20-302,
  author  = {Jorge P{\'e}rez and Pablo Barcel{\'o} and Javier Marinkovic},
  title   = {Attention is Turing-Complete},
  journal = {Journal of Machine Learning Research},
  year    = {2021},
  volume  = {22},
  number  = {75},
  pages   = {1--35},
  url     = {http://jmlr.org/papers/v22/20-302.html}
}

@article{COHEN19711,
title = {Dot-depth of star-free events},
journal = {Journal of Computer and System Sciences},
volume = {5},
number = {1},
pages = {1-16},
year = {1971},
issn = {0022-0000},
doi = {https://doi.org/10.1016/S0022-0000(71)80003-X},
url = {https://www.sciencedirect.com/science/article/pii/S002200007180003X},
author = {Rina Cohen and Janusz Antoni Brzozowski},
}

@inproceedings{etessamiHierarchyTemporalLogic1996,
  title = {An {{Until}} Hierarchy for Temporal Logic},
  booktitle = {Proceedings 11th {{Annual IEEE Symposium}} on {{Logic}} in {{Computer Science}}},
  author = {Etessami, K. and Wilke, T.},
  year = {1996},
  month = jul,
  pages = {108--117},
  issn = {1043-6871},
  doi = {10.1109/LICS.1996.561310},
  urldate = {2025-03-13},
  abstract = {We prove there is a strict hierarchy of expressive power according to the Until depth of linear temporal logic (TL) formulas: for each k, there is a very natural property that is not expressible with k nestings of Until operators, regardless of the number of applications of other operators, but is expressible by a formula with Until depth k+1. Our proof uses a new Ehrenfeucht-Fraisse (EF) game designed specifically for TL. These properties can all be expressed in first-order logic with quantifier depth and size O(log k), and we use them to observe some interesting relationships between TL and first-order expressibility. We then use the EF game in a novel way to effectively characterize (1) the TL properties expressible without Until, as well as (2) those expressible without both Until and Next. By playing the game "on finite automata", we prove that the automata recognizing languages expressible in each of the two fragments have distinctive structural properties. The characterization for the first fragment was originally proved by Cohen, Perrin, and Pin (1993) using sophisticated semigroup-theoretic techniques. They asked whether such a characterization exists for the second fragment. The technique we develop is general and can potentially be applied in other contexts.},
  keywords = {Application specific integrated circuits,Automata,Character recognition,Logic design},
  file = {/Users/anej/Zotero/storage/RN7BC5V2/561310.html}
}

@article{therienTemporalLogicSemidirect2001,
  title = {Temporal {{Logic}} and {{Semidirect Products}}: {{An Effective Characterization}} of the {{Until Hierarchy}}},
  shorttitle = {Temporal {{Logic}} and {{Semidirect Products}}},
  author = {Th{\'e}rien, Denis and Wilke, Thomas},
  year = {2001},
  month = jan,
  journal = {SIAM Journal on Computing},
  volume = {31},
  number = {3},
  pages = {777--798},
  publisher = {{Society for Industrial and Applied Mathematics}},
  issn = {0097-5397},
  doi = {10.1137/S0097539797322772},
  urldate = {2025-03-21},
  abstract = {The equivalence problem for deterministic two-way transducers was a longstanding problem finally shown decidable by E. M. Gurari. We show a generalization of this result, namely that it is decidable whether two single-valued two-way transducers are equivalent on an NPDTOL language. We obtain this result by showing that it is even decidable whether a two-way transducer is single-valued on an NPDTOL language. Our solution is based on the compactness property of the systems of equations over a finitely generated free monoid shown recently by J. Lawrence and M. Albert and also V. S. Guba. This was a longstanding open problem known as the Ehrenfeucht Conjecture. Our approach also shows that for one-way transducers their single-valuedness on a given HDTOL language can be tested, and thus the equivalence of two single-valued one-way transducers on an HDTOL language is decidable, too.Our results imply the decidability of the HDOL and HDTOL sequence equivalence problems, also longstanding open problems solved recently by K. Ruohonen and the authors, respectively.},
  file = {/Users/anej/Zotero/storage/HY6ASL6A/Thérien and Wilke - 2001 - Temporal Logic and Semidirect Products An Effective Characterization of the Until Hierarchy.pdf}
}

@misc{yang2024simulatinghardattentionusing,
      title={Simulating Hard Attention Using Soft Attention}, 
      author={Andy Yang and Lena Strobl and David Chiang and Dana Angluin},
      year={2024},
      eprint={2412.09925},
      archivePrefix={arXiv},
      primaryClass={cs.LG},
      url={https://arxiv.org/abs/2412.09925}, 
}
\bibliographystyle{acl_natbib}

\appendix
\onecolumn

\section{Related Work} \label{app:related-work} 

Existing work has established a rich landscape of results on the expressivity of transformers.

\paragraph{Lower and upper bounds for \uha.}
\citet{Hahn_2020} show that \uha transformers with unbounded precision, left attention, and no masking can not recognize \textsc{Parity} (bit strings with an odd number of ones) nor \textsc{Dyck-1} (language of correctly nested parentheses of one type).
\citet{hao2022formallanguagerecognitionhard} refine this result by showing that \uha transformers with unbounded precision and left attention can recognize at most languages in \textsc{AC\textsuperscript{0}}, the family of circuits of constant depth, polynomial size, and unbounded fan-in. 
Maybe surprisingly, this suggests such transformers can not recognize even simple languages outside \textsc{AC\textsuperscript{0}} such as \textsc{Parity} and \textsc{Majority} (all bit strings in which more than half of bits are 1s). 
Other problems not in \textsc{AC\textsuperscript{0}} include sorting, integer multiplication \citep{chandra}, and integer division \citep{hessedivision}. 
\citet{barcelo2023logicallanguagesacceptedtransformer} show that \uha transformers augmented with arbitrary positional encodings are lower bounded by an extension of $\fo$ with all possible monadic numerical predicates (which includes all regular languages in \textsc{AC\textsuperscript{0}}).
\citet{yang2024maskedhardattentiontransformersrecognize} further refine the understanding of the relationship between $\fo$ and finite-precision transformers by proving the equivalence between the two when the transformers are equipped with strict future masking.

\paragraph{Average-hard attention.} 
Average-hard attention (\aha) differs from \uha in that when confronted with several positions with equal scores, they average their values to compute the next contextual representation.
It can be seen as a special case of soft-attention \citep{jiaodaspaper}. 
\citet{hao2022formallanguagerecognitionhard} show \aha \emph{unbounded} precision transformers are more expressive than \uha transformers, as they can recognize languages outside \textsc{AC\textsuperscript{0}}, such as \textsc{Parity} and \textsc{Dyck-1}.
\citet{merrill2022saturatedtransformersconstantdepththreshold} show that \aha transformers with floating-point activations can be simulated in \textsc{TC\textsuperscript{0}} (the extension of \textsc{AC\textsuperscript{0}} with majority gates, which output 1 iff at least half of the inputs are 1), while \citet{strobl2023averagehardattentiontransformersconstantdepth} extend this result by tightening the upper bound to \textsc{L-uniform TC\textsuperscript{0}} (which consists of \textsc{TC\textsuperscript{0}} circuits with the additional constraint that there exists a deterministic Turing Machine that runs in logarithmic space that can describe the circuits).

\paragraph{Transformers and logic.} 
Previous results relating transformers to logic include \citet{chiang2023tighterboundsexpressivitytransformer}, who show that finite-precision softmax transformers are upper-bounded by a generalization of first-order logic with counting quantifiers and modular arithmetic over input position indices. 
On the other hand, they show this logic to be a lower bound on the expressivity of unbounded-precision transformers.
\citet{merrill2023logicexpressinglogprecisiontransformers} contribute by characterizing a more expressive variant of transformers---they allow precision logarithmic in the input length and show an upper bound of first-order logic extended with majority-vote quantifiers. 

\paragraph{Equivalence of right-attention transformers to $\ltl$.}
\citet{yang2024maskedhardattentiontransformersrecognize} show the equivalence of $\tfFR$ and $\tfPL$ to $\ltl$. 
In their constructions, the operators $\since$ and $\until$ in $\ltl$ can only be expressed by transformers in $\tfFR$, $\tfPL$ respectively (and vice-verse, $\tfFR$ and $\tfPL$ require the operators $\since$,$\until$ respectively in their $\ltl$ formulation). Moreover, by \citet{gabbay}, the fragment of $\ltl$ consisting of only $\until$ and the Boolean connectives, denoted by $\utl$ (or, in the analogous symmetric case, $\stl$), is sufficient for equivalence $\fo$ (which is equivalent to $\ltl$ by \citet{Kamp1968-KAMTLA}).

\paragraph{Equivalence of soft-attention transformers to $\ptl$.}
\citet{jiaodaspaper} relates $\ptl$ and $\pfo$ (the fragment of $\fo$ which considers at most two distinct variables at the same time where any bound variable can only peek into the past of a free variable) to languages with $\gR$-trivial monoids \citep{BRZOZOWSKI198032}.
These are described by a set of equivalent formalisms such as $\gR$-expressions, $\gR$-trivial monoids, and partially ordered automata. 
\citet{jiaodaspaper} use this equivalence to show strict future-masked softmax transformers are equivalent to $\ptl$. 
The upper bound is proven through the equivalent $\pfo$: They show $\pfo$ can simulate a sum of a finite number of fixed-precision floating-point numbers (and thus the weighted sum in softmax), while other transformations (computations of keys and queries, dot-products, etc.) can be computed with the standard Boolean connectives. 
They show the lower bound of $\ptl$ by translating the standard Boolean operations into feedforward networks, and the $\past$ operation with the future-masked attention mechanism.
Together with \citeposs{hao2022formallanguagerecognitionhard} results, this illuminates the complexity of transformer expressivity: While in the unbounded-precision regime, \aha transformers strictly subsume \uha transformers, in the finite-precision regime, the relationship depends on the direction of attention mechanism.
The models are equivalent if leftmost tiebreaking is used and in the case of rightmost tiebreaking, \uha is strictly more expressive than \aha, the reverse of the unbounded-precision case.

\section{Background} \label{app:preliminaries}

This section provides the necessary background on the formalisms used in the paper and details some concepts introduced in the main part of the paper.

\subsection{Finite-State Automata}

\begin{definition}  \label{def:semiautomaton}
    A \defn{semiautomaton} $\wfsa$ is a 3-tuple $\satuple$ where $\alphabet$ is an alphabet, $\states$ is a finite set of \defn{states} and $\trans \colon \states \times \alphabet \rightarrow \states$ is a \defn{transition function}.
    
    We further define an \defn{initialized semiautomaton} as a semiautomaton with an initial state. 
\end{definition}

\begin{definition}  \label{def:dfa}
    A \defn{deterministic finite automaton (DFA)} $\wfsa$ is a 5-tuple $\fsatuple$ where $\satuple$ is a semiautomaton, $\qinit \in \states$ is an initial state, and $\final \subseteq \states$ is a set of final states.
\end{definition}

\begin{definition}
\label{def:p.o}
    Let $\transclosure \colon \states \times \kleene{\alphabet} \rightarrow \states$ be the transitive closure of $\trans$, defined as 
    \begin{subequations}
        \begin{align}
            \transclosure(\stateq, \sym) &= \trans(\stateq, \sym), \; \text{for } \sym \in \alphabet\\
            \transclosure(\stateq, \sym_{1}\cdots\sym_{\strlen}) &= \trans(\transclosure(\stateq,\sym_1\cdots\sym_{\strlen-1}), \sym_{\strlen})
        \end{align}
    \end{subequations}
    with $\transclosure(\stateq, \varepsilon) = \stateq$ for any $\stateq \in \states$.
    A \defn{partially ordered DFA (\pofaAcr{})} is a DFA $\wfsa = \dfatuple$ where there is a partial order relation $\relation$ on $\states$ defined as $\stateq$ $\relation$ $\statep$ if and only if $\transclosure(\stateq, \str) = \statep$ for some string $\str \in \kleene{\alphabet}$. 
\end{definition}

Intuitively, \pofaAcr{}s are \emph{acyclic} DFAs with possible self-loops, resulting in partially ordered states.

\begin{definition}  \label{def:rdfa}
    Let a DFA $\wfsa = \fsatuple$ and $\lang$ be the language it accepts. We define the \defn{reverse automaton} $\wfsar$ as the automaton that recognizes the reverse language $\rlang$ consisting of all strings in $\lang$ but reversed.
\end{definition}

\begin{definition}  \label{def:p.o.r}
    A DFA $\wfsa$ is a \defn{partially ordered reverse automaton} (\rpofaAcr) if $\wfsar$ is a \pofaAcr{}.
\end{definition}

\begin{definition}  \label{def:cascadepd}
    Let $\mymacro{ \mathcal{B}}_1 = (\alphabet, \states_1, \trans_1)$ be a semiautomaton. Let $\mymacro{ \mathcal{B}}_2 = (\states_1 \times \alphabet, \states_2, \trans_2)$ be another, possibly partial,\footnote{A partial automaton is one in which some symbol--state combinations could lead to an undefined transition.} semiautomaton such that for every $\stateq_1 \in \states_1, \stateq_2 \in \states_2$, either $\trans(\stateq_2, \langle \stateq_1, \sym \rangle)$ is defined for every $\sym \in \alphabet$ or undefined for every $\sym \in \alphabet$. The \defn{cascade product} $\mymacro{ \mathcal{B}}_1 \circ \mymacro{ \mathcal{B}}_2$ is the semiautomaton $\mymacro{ \mathcal{C}} = (\alphabet, \states, \trans)$ such that:
    \begin{itemize}
        \item $\states = \{ (\stateq_1, \stateq_2 ) : \trans_2(\stateq_2, \langle \stateq_1, \sym \rangle) \text{ is defined} \}$
        \item $\trans(\langle \stateq_1, \stateq_2 \rangle, \sym) = (\trans_1(q_1, \sym), \trans_2(\stateq_2, \langle \stateq_1, \sym \rangle))$        
    \end{itemize}
\end{definition}

\begin{definition}  \label{def:nway}
    For $n \geq 0$, an \defn{n-way fork} is an initialized semiautomaton ($\alphabet$, $\{ \stateq_0, \stateq_1, \cdots, \stateq_n \}$, $\trans$) where $\alphabet = \alphabet_0 \cup \alphabet_1 \cdots \cup \alphabet_n$, the $\alphabet_i$'s are non-empty and pairwise disjoint, $\trans(\stateq_0, \syma) = \stateq_i$ for all $\syma \in \alphabet_i$, and $\trans(\stateq_i, \syma) = \stateq_i$ for all $\syma \in \alphabet$. A \defn{half-reset} (\cref{fig:automata}) is a 1-way fork.
\end{definition}

\begin{figure}[h]
    \centering
    \begin{tikzpicture}
        \node[state, initial] (q0) [] { $\stateq_0$ }; 
        \node[state] (q1) [below = of q0] { $\stateq_1$ };  
        \draw[transition] (q0) edge[auto, bend left] node{$\alphabet_1$} (q1) 
        (q0) edge[auto, loop right] node{$\alphabet_0$} (q0)
        (q1) edge[auto, loop right] node{$\alphabet$} (q1);
    \end{tikzpicture}  
    \caption{A 1-way fork}
    \label{fig:automata}
\end{figure}
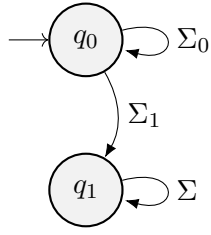

\begin{definition}  \label{def:homomorphism}
    
    A surjection $\homomorphism \colon \states \to \states'$ is an \defn{automaton homomorphism} from the semiautomata $\wfsa = \satuple$  to $\wfsa' = (\alphabet', \states', \trans')$ if for every $\stateq \in \states, \sym \in \alphabet$:
    \begin{equation}
        \homomorphism(\trans(\stateq, \sym)) = \trans'(\homomorphism(\stateq), \sym)
    \end{equation}
    In this case, we say $\wfsa'$ is homomorphic to $\wfsa$, or that $\wfsa'$ is the homomorphic image of $\wfsa$.
\end{definition}

\subsection{Syntactic Monoids}

\begin{definition}  \label{def:monoid}
    A $\defn{monoid}$ $\monoid$ is a set equipped with a binary operation and an identity element. 
\end{definition}
For instance, the \defn{free monoid} is the set $\kleene{\alphabet}$ equipped with the concatenation operation and the empty string $\eps$ as identity.

\begin{definition}  \label{def:rtrivial}
    A monoid $\monoid$ is \defn{$\gR$-trivial} if for all $\sym_1, \sym_2, \sym_3 \in \monoid$, $\sym_1\sym_2\sym_3=\sym_1$ implies $\sym_1\sym_2=\sym_1$.
    A monoid $\monoid$ is \defn{$\gL$-trivial} if for all $\sym_1, \sym_2, \sym_3 \in \monoid$, $\sym_3\sym_2\sym_1=\sym_1$ implies $\sym_2\sym_1=\sym_1$.
\end{definition}
More details about $\gR$-trivial monoids can be found in \citet{BRZOZOWSKI198032}.

\begin{definition}  \label{def:congruence}
    The \defn{syntactic congruence} $\relation_\lang$ is the equivalence relation on $\kleene{\alphabet}$ given the language $\lang$ such that for all $\strx$, $\stry \in \kleene{\alphabet}$, we have $\strx \relation_\lang \stry$ if and only if:
    \begin{equation}
        \strs \strx \strz \in \lang \iff \strs \stry \strz \in \lang \ \forall \strs, \strz \in \kleene{\alphabet}
    \end{equation}
\end{definition}

\begin{definition}\label{def:syntacticmonoid}
    The quotient monoid $\kleene{\alphabet} / \relation_\lang$ is the \defn{syntactic monoid} of $\lang$.
\end{definition}

\subsection{Regular Expressions}

A \defn{regular language} can be described by \defn{regular expressions}, which are elements of the closure of $\emptyset, \eps, $ and all $ \sym \in \alphabet$  under concatenation, concatenation, and Kleene star. 
A regular language is \defn{star-free} if the regular expression that describes the language does not require the Kleene star operator. 

\begin{definition}\label{rexpressions}
    An $\gR$-expression is a finite union of regular expressions of the form $\kleene{\alphabet_0}\sym_1\kleene{\alphabet_1}\cdots\sym_\idxn\kleene{\alphabet_\idxn}$ where $\sym_\tstep \in \alphabet$, $\alphabet_\tstep \subseteq \alphabet$ and $\sym_\tstep \notin \kleene{\alphabet_{\tstep-1}}$ for $1 \leq \tstep \leq \idxn$. An $\gL$-expression is defined analogously with the dual constraint of $\sym_\tstep \notin \kleene{\alphabet_{\tstep}}$. 
\end{definition}
For instance, $\syma\kleene{\alphabet}$ is an $\gR$-expression, while $\kleene{\alphabet}\syma$ is an $\gL$-expression.

\subsection{Linear Temporal Logic}
We now present all possible semantics in $\ltl$.
The semantics are defined inductively:
\begin{itemize}[nosep,noitemsep,leftmargin=*]
    \item $\str,t\models \atom_\syma$ $\iff$ $\sym_t=\syma$;
    \item $\str,t\models \tlf_1 \lor \tlf_2$ $\iff$ $\str,t\models \tlf_1 \lor \str,t\models \tlf_2$;
    \item $\str,t\models \tlf_1 \land \tlf_2$ $\iff$ $\str,t\models \tlf_1 \land \str,t\models \tlf_2$;
    \item $\str,t\models \neg \tlf$ $\iff$ $\str,t \not\models \tlf$;
    \item $\str,t\models \past \tlf$ $\iff$ $\exists t' < t: \str, t'\models \tlf$;
    \item $\str,t\models \future \tlf$ $\iff$ $\exists t' > t: \str, t'\models \tlf$;
    \item $\str,t\models \tlf_1 \since \tlf_2$ $\iff$ $\exists t'<t: \str,t'\models \tlf_2$ and $\str,k\models \tlf_1$ for all $k$ with $t'<k<t$;
    \item $\str,t\models \tlf_1 \until \tlf_2$ $\iff$ $\exists t'>t: \str,t'\models \tlf_2$ and $\str,k\models \tlf_1$ for all $k$ with $t<k<t'$.
\end{itemize}

$\ltl$ defines exactly the class of star-free languages \citep{Kamp1968-KAMTLA,mcnaughton1971counter}. 
$\ptl$ and $\ftl$, the fragments of \ltlAcr{} with only the $\past$ and $\future$ operators, respectively, are strictly less expressive than $\ltl$ (neither can recognize $\syma \kleene{\alphabet} \symb$, which can be recognized by $\ltl$), and have characterizations in terms of monoids, automata, and first-order logic (\cref{app:equivformal}). 

\subsection{First-Order Logic}

First-order logic with the $<$ relation, denoted by $\fo$, extends the usual propositional logic with predicates and quantifiers. 
We consider free variables $x, y, z, \cdots$ that represent positions over a string $\str$ of size $\strlen$. 
$\fo$ includes unary predicates $\atom_\sym$ for $\sym \in \alphabet$, where $\atom_\sym(x) = \TRUE$ $\iff$ there is a $\sym$ at position $x$. 
As with $\ltl$, we can inductively define formulas from $\atom_\sym$ using the standard Boolean operators, the binary predicate $<$ between positions, and the existential quantifier $\exists$. 
By \citet{Kamp1968-KAMTLA}, $\fo$ is equivalent to $\ltl$ and by \citet{mcnaughton1971counter}, it is equivalent to the star-free languages. 

$\fo[2]$ is the fragment of $\fo$ that can only consider two distinct variables at the same time.
\citet{jiaodaspaper} further define $\pfo$ as the past fragment of $\fo[2]$ that can only ``peek into the past.'' 
Namely, any single-variable formula $\fof(x)$ can only have bounded existential quantifiers of the form "$\exists y < x$". We analogously have the future fragment of $\fo[2]$, $\ffo$, where we only allow peeking into the future with quantifiers of the form "$\exists y > x$".

\subsection{Equivalence Between Formalisms}\label{app:equivformal}

Due to \citet{BRZOZOWSKI198032} and \citet{jiaodaspaper}, we have the two dual theorems:

\begin{restatable}{theorem}{}\label{thm:rtrivial}
    Let $\lang \subseteq \kleene{\alphabet}$ be a regular language, $\monoid$ be its syntactic monoid, and $\wfsa$ be the minimal DFA accepting it. The following conditions are equivalent:
    \begin{enumerate}[label=(\roman*)]
        \item $\monoid$ is $\gR$-trivial,
        \item $\lang$ can be denoted by an $\gR$-expression,
        \item $\wfsa$ is a \pofaAcr{},
        \item $\wfsa$ is the homomorphic image of a cascade product of half-resets,
        \item $\lang$ can be recognized by a formula in $\pfo$,
        \item $\lang$ can be recognized by a formula in $\ptl$.
    \end{enumerate}
\end{restatable}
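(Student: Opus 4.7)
The plan is to establish the six-way equivalence by a cycle of implications linking the algebraic, combinatorial, and automata-theoretic conditions (i)--(iv), and then tying in the two logical conditions (v) and (vi). A convenient order is (iii) $\Rightarrow$ (i) $\Rightarrow$ (ii) $\Rightarrow$ (iii), supplemented with (iii) $\Leftrightarrow$ (iv), followed by the chain (i) $\Leftrightarrow$ (vi) $\Leftrightarrow$ (v). Several of these individual equivalences already appear in \citet{BRZOZOWSKI198032} and \citet{jiaodaspaper}, so the task is largely to invoke and assemble them coherently while supplying direct proofs only where needed.

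First I would prove (iii) $\Rightarrow$ (i) by exploiting the canonical action of $\kleene{\alphabet}$ on the states of the minimal \pofaAcr{} $\wfsa$. Given $\strx, \strs, \strz \in \kleene{\alphabet}$ with $\strx\strs\strz \relation_\lang \strx$, the definition of the syntactic congruence together with minimality forces $\transclosure(\stateq, \strx\strs\strz) = \transclosure(\stateq, \strx)$ for every state $\stateq$ of $\wfsa$. Setting $\statep = \transclosure(\stateq, \strx)$, the word $\strs\strz$ takes $\statep$ back to itself, and since $\wfsa$ is partially ordered, every intermediate state visited during the traversal must equal $\statep$; in particular $\transclosure(\statep, \strs) = \statep$, which lifts back to $\strx\strs \relation_\lang \strx$, yielding $\gR$-triviality. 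For (i) $\Rightarrow$ (ii) I would invoke the characterization from \citet{BRZOZOWSKI198032}: in an $\gR$-trivial monoid, every element admits a canonical factorization alternating a fresh ``progress'' letter and a stable alphabet of loops, from which the required form $\kleene{\alphabet_0}\sym_1\kleene{\alphabet_1}\cdots \sym_\idxn\kleene{\alphabet_\idxn}$ with $\sym_\tstep \notin \kleene{\alphabet_{\tstep-1}}$ can be read off. For (ii) $\Rightarrow$ (iii) I would build, for each summand, a linear chain automaton whose states are totally ordered with self-loops on the prescribed $\alphabet_\tstep$, and then close the \pofaAcr{} class under finite union followed by determinization--minimization.

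For (iii) $\Leftrightarrow$ (iv) I would invoke a Krohn--Rhodes-style decomposition specialized to $\gR$-trivial semiautomata: a semiautomaton is partially ordered iff it divides a cascade of half-resets, half-resets being precisely the minimal nontrivial building block compatible with acyclicity-up-to-self-loops; the direction (iv) $\Rightarrow$ (iii) follows from the direct check that cascade products and automaton homomorphisms preserve the existence of a partial order on states. Finally, (i) $\Leftrightarrow$ (vi) is the algebraic-to-temporal correspondence already established in \citet{jiaodaspaper}, and (v) $\Leftrightarrow$ (vi) is the past fragment of Kamp's theorem: one direction is a straightforward induction translating each $\past$-subformula into a $\pfo$ subformula using one additional bound variable, while the reverse direction uses a separation-style argument to ensure that each binding in a $\pfo$ formula can be captured by a $\past$ operator without introducing $\since$ or $\future$.

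The main obstacle is expected to be (i) $\Rightarrow$ (ii), the Brzozowski--Fich-style direction: it requires a delicate inductive combinatorial argument on the $\gR$-height of monoid elements, matching each level of the $\gR$-order to a new progress letter and its stable alphabet, while enforcing the uniqueness side-condition $\sym_\tstep \notin \kleene{\alphabet_{\tstep-1}}$. A secondary difficulty is past-preservation in the $\pfo \Rightarrow \ptl$ translation, where care is needed to guarantee that the compiled temporal formula uses only $\past$ and not $\since$ or $\future$, which rules out an off-the-shelf application of the standard two-sided Kamp construction.
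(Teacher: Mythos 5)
The paper does not actually prove this statement: it is stated as a background theorem in the appendix, attributed to \citet{BRZOZOWSKI198032} for the algebraic and automata-theoretic equivalences and to \citet{jiaodaspaper} for the logical ones, so there is no in-paper argument for your proposal to diverge from. Your assembly is consistent with those sources---the directions you delegate ((i)$\Rightarrow$(ii), the half-reset cascade decomposition (iii)$\Leftrightarrow$(iv), and $\pfo \Leftrightarrow \ptl$) are precisely the ones established there---and the implications you argue directly ((iii)$\Rightarrow$(i) via antisymmetry of the reachability order on the transition monoid of the minimal DFA, (ii)$\Rightarrow$(iii) via chain automata and products, (iv)$\Rightarrow$(iii) via preservation of partial order) are sound. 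One step you gloss over in (ii)$\Rightarrow$(iii): after taking the product of the (already deterministic) chain automata you must argue that the \emph{minimal} DFA inherits the partial order, i.e., that quotients of partially ordered DFAs are partially ordered; this needs a short stabilization argument (iterate the alleged cycle word until the preimage state stabilizes, then use antisymmetry), or can again be delegated to \citet{BRZOZOWSKI198032}.
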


\begin{restatable}{theorem}{}\label{thm:ltrivial}
    Let $\lang \subseteq \kleene{\alphabet}$ be a regular language, $\monoid$ be its syntactic monoid, and $\wfsa$ be the minimal DFA accepting it. The following conditions are equivalent:
    \begin{enumerate}[label=(\roman*)]
        \item $\monoid$ is $\gL$-trivial,
        \item $\lang$ can be denoted by an $\gL$-expression,
        \item $\wfsa$ is a \rpofaAcr{} (equivalently, $\wfsar$ is a \pofaAcr{}),
        \item $\wfsar$ is the homomorphic image of a cascade product of half-resets,
        \item $\lang$ can be recognized by a formula in $\ffo$,
        \item $\lang$ can be recognized by a formula in $\ftl$.
    \end{enumerate}
\end{restatable}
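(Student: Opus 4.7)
The plan is to prove \cref{thm:ltrivial} by string reversal, reducing each of the six characterizations of $\gL$-trivial languages $\lang$ to the corresponding $\gR$-trivial characterization of the reversed language $\rlang$, and then invoking \cref{thm:rtrivial}. The bridging observation is that reversal induces a bijection on each of the six formalisms that swaps the ``$\gL$'' flavor with the ``$\gR$'' flavor.

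First, I would establish the monoid-level duality: $\monoid$ is $\gL$-trivial if and only if the opposite monoid $\monoid^{\mathrm{op}}$ (with product $\sym \cdot^{\mathrm{op}} \sym' \defeq \sym' \sym$) is $\gR$-trivial, and $\monoid^{\mathrm{op}}$ is exactly the syntactic monoid of $\rlang$, since $\strs \strx \strz \in \lang$ iff $\strz^\reverse \strx^\reverse \strs^\reverse \in \rlang$, so the syntactic congruences correspond under reversal. This turns condition (i) for $\lang$ into condition (i) of \cref{thm:rtrivial} for $\rlang$. Next, I match the remaining conditions pairwise. For (ii), an $\gL$-expression $\kleene{\alphabet_0}\sym_1\kleene{\alphabet_1}\cdots\sym_n\kleene{\alphabet_n}$ for $\lang$ reverses to $\kleene{\alphabet_n}\sym_n\kleene{\alphabet_{n-1}}\cdots\sym_1\kleene{\alphabet_0}$, and the side condition $\sym_t \notin \kleene{\alphabet_t}$ swaps to $\sym_t \notin \kleene{\alphabet_{t-1}}$ under index reversal, yielding an $\gR$-expression for $\rlang$. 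For (iii), the definition of \rpofaAcr{} is literally ``$\wfsar$ is a \pofaAcr{}'', which matches condition (iii) of \cref{thm:rtrivial} for $\rlang$. For (iv), the statement is already phrased in terms of $\wfsar$, matching condition (iv) of \cref{thm:rtrivial}. For (v) and (vi), reversal sends ``$\exists y > x$'' to ``$\exists y < x$'' and $\future$ to $\past$, giving semantic-preserving bijections between $\ffo$ (resp. $\ftl$) formulas for $\lang$ and $\pfo$ (resp. $\ptl$) formulas for $\rlang$.

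Having shown that condition $(k)$ of \cref{thm:ltrivial} for $\lang$ holds if and only if condition $(k)$ of \cref{thm:rtrivial} for $\rlang$ holds, the mutual equivalence of the six conditions for $\lang$ follows directly from \cref{thm:rtrivial} applied to $\rlang$. The main obstacle is the automaton side of (iii): $\wfsar$ is in general nondeterministic, while \pofaAcr{} is defined for DFAs, so one must argue that determinizing and minimizing $\wfsar$ preserves the partially ordered structure. I would handle this via Brzozowski's construction---reversing, determinizing, and trimming a minimal DFA yields the minimal DFA of the reversed language---together with the observation that partial orderability of the state set is a property of the Myhill--Nerode right congruence classes, which are themselves dual to the left congruence classes of the original language.
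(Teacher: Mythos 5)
Your proposal is essentially sound, but note that the paper does not prove this statement at all: \cref{thm:rtrivial,thm:ltrivial} are both stated as imported results, attributed to \citet{BRZOZOWSKI198032} and \citet{jiaodaspaper}, so there is no in-paper proof to compare against. Your reversal-duality reduction is exactly the standard argument that underlies the paper's informal remarks in \cref{app:duality}: the opposite monoid of $\monoid$ is the syntactic monoid of $\rlang$ and is $\gR$-trivial iff $\monoid$ is $\gL$-trivial; reversal of an $\gL$-expression (with the index relabeling you describe) gives an $\gR$-expression; and the syntactic translations $\future \leftrightarrow \past$ and $\exists y > x \leftrightarrow \exists y < x$ give the $\ftl$/$\ptl$ and $\ffo$/$\pfo$ correspondences, so each condition for $\lang$ matches the corresponding condition of \cref{thm:rtrivial} for $\rlang$. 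Two points deserve care. First, for the $\ftl$/$\ptl$ step you should fix the acceptance convention for $\ftl$ (evaluation at position $0$, mirroring the paper's evaluation at $\length+1$ for $\ptl$), since the paper only defines acceptance for the past fragment; without that, ``equivalent formula under reversal'' is not fully pinned down. Second, on the automaton condition: your instinct that something must be said is right, but your phrasing ``determinizing and minimizing $\wfsar$ preserves the partially ordered structure'' is not a safe lemma in general---determinization of a partially ordered NFA need not yield a partially ordered DFA. The clean fix is the one you also mention: read condition (iii) (as the cited literature does) as a statement about the \emph{minimal DFA of $\rlang$}, and use the Brzozowski step (reverse the minimal DFA of $\lang$, determinize, restrict to reachable states) only to identify that minimal DFA, after which \cref{thm:rtrivial}(iii) applies verbatim; no preservation claim about partial orders under determinization is then needed. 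With that reading, your reduction is complete and matches the intended (cited) proof route.
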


\section{Proofs of $\tfFL$ and $\ptl$ Equivalence}\label{app:proofs_brasp}
The equivalence between $\tfFL$ and $\braspFL$ follows directly from Theorems 3 and 4 of \citet{yang2024maskedhardattentiontransformersrecognize}.
\braspFLtfFLEquivalence*
\begin{proof}
    \citet[][Thms. 3 and 4]{yang2024maskedhardattentiontransformersrecognize} and the supporting lemmata (Lemmata 21 and 24) treat $\leftmost$ and $\rightmost$ separately. 
    Translations constrained to $\leftmost$ are thus special cases of their proofs.
\end{proof}

The following proofs largely follow the proofs of Lemmata 13 and 14 of \citet{yang2024maskedhardattentiontransformersrecognize}. 
We highlight the part that distinguishes $\leftmost$ and $\rightmost$ in \textcolor{ETHRed}{red}.
\ptlToBraspFLTheorem*
\begin{proof}
We proceed by induction.
\paragraph{Base case.} The atomic formulas $\atom_\syma$ can be represented by initial Boolean vectors $\atomvec_\syma$ for $\syma \in \alphabet$.
\paragraph{Inductive step.} Assume $\tlf_1$ and $\tlf_2$ can be converted to $\braspFL$ vectors $\inductvec_{\tlf_1}$ and $\inductvec_{\tlf_2}$, respectively.
We distinguish three cases of building a new formula from $\tlf_1$ and $\tlf_2$:
\begin{enumerate}[label={\textbullet~Case \textit{(\arabic*)}}:,nosep,noitemsep,wide, labelindent=0pt]
    \item $\tlf = \neg \tlf_1$.
    Add a position-wise operation:
    \begin{equation}
        \inductvec_\tlf(\tstep) = \neg P_{\tlf_1}(\tstep).
    \end{equation}
    \item $\tlf = \tlf_1 \land \tlf_2$.
    Add a position-wise operation:
    \begin{equation}
        \inductvec_\tlf(\tstep) = \inductvec_{\tlf_1}(\tstep) \land \inductvec_{\tlf_2}(\tstep).
    \end{equation}
    \item \textcolor{ETHRed}{$\tlf = \past \tlf_1$.}
    \textcolor{ETHRed}{
    Add an attention operation with future masking and $\leftmost$ tiebreaking:
    \begin{equation}
        \inductvec_{\tlf}(\tstep)=\leftmost_{\tstep'} [\tstep'<\tstep, \inductvec_{\tlf_1}(\tstep')] \; \inductvec_{\tlf_1}(\tstep') : 0.
    \end{equation}%
    }
\end{enumerate}
\end{proof}

\braspFLToPtlTheorem*
\begin{proof}
    We proceed by induction.
    \paragraph{Base case.} Each atomic vector $\atomvec_{\syma}(\tstep)$ can be translated to the atomic formula $\atom_{\syma}$.
    \paragraph{Induction step. } Assume vectors $\inductvec_1,\ldots,\inductvec_{\idxi-1}$ can be translated to $\tlf_{\inductvec_1},\ldots,\tlf_{\inductvec_{\idxi-1}}$.
    We distinguish different options of building a new vector out of $\inductvec_1,\ldots,\inductvec_{\idxi-1}$:
    \begin{enumerate}[label={\textbullet~Case \textit{(\arabic*)}}:,nosep,noitemsep,wide, labelindent=0pt]
        \item  $\inductvec_{\idxi}(\tstep)$ is a position-wise operation:
        \begin{equation}
            \inductvec_{\idxi}(\tstep) = f(\inductvec_1(\tstep), \ldots, \inductvec_{\idxi-1}(\tstep)),
        \end{equation}
        where $f$ is a Boolean function.
        We can translate $\inductvec_{\idxi}(\tstep)$ into $\tlf_{\idxi} = f(\tlf_1, \ldots, \tlf_{\idxi-1})$.
        \item \textcolor{ETHRed}{
        $\inductvec_{\idxi}(\tstep)$ is an attention operation that uses leftmost tiebreaking and future masking, that is,
        \begin{equation}
            \inductvec_{\idxi}(\tstep)=\leftmost_{\tstep'} [\tstep'<\tstep, \scorevec(\tstep')] \; \valuevec(\tstep') : \defaultvec(\tstep).
        \end{equation}%
        }
        \textcolor{ETHRed}{%
        By the inductive hypothesis, there are $\ptl$ formulas $\tlf_S$, $\tlf_V$, and $\tlf_D$ corresponding to $\scorevec$, $\valuevec$, and $\defaultvec$. 
        We can thus write $\inductvec_{\idxi}(i)$ as:
        \begin{equation}
        \label{eq:leftmost}
            \tlf_{\idxi} = (\past (\tlf_S\land \lnot \past\tlf_S \land \tlf_V)) \lor (\lnot(\past \tlf_S)\land \tlf_D).
        \end{equation}
        where $\tlf_S\land \lnot \past\tlf_S$ identifies the leftmost position that satisfies $\tlf_S$ in the string. 
        }
        \item \textcolor{ETHRed}{
        We now treat the attention operation with rightmost tiebreaking and future masking, i.e., 
        \begin{equation}
            \inductvec_{\idxi}(\tstep) = \rightmost_{\tstep'} [\tstep'<\tstep, \scorevec(\tstep')] \; \valuevec(\tstep') : \defaultvec(\tstep).
        \end{equation}
        In this case, we need to find the rightmost position satisfying $\tlf_S$ not in the \emph{entire string} $\str$, but before the \emph{current position}, which can only be realized using the $\since$ operator:
        \begin{equation}
            \tlf_{\idxi} = (\lnot\tlf_S\since (\tlf_S \land \tlf_V)) \lor (\lnot(\past \tlf_S)\land \tlf_D).
        \end{equation}
        }
    \end{enumerate}
\end{proof}

\subsection{\pofaAcr{}s as $\tfFL$} \label{app:pofsa-to-transformer} 
The Krohn--Rhodes theorem \citep{krdecomp} states that any deterministic automaton is the homomorphic image of a cascade of simple automata whose transitions induce resets or permutations of the states. 
\cref{thm:rtrivial} by \citet{BRZOZOWSKI198032} provides an analogous decomposition for \pofaAcr{}s, namely that they are homomorphic to a cascade of \emph{half-resets}. 
In a very similar manner to \citet{yang2024maskedhardattentiontransformersrecognize}'s Thm. 15, we can express this decomposition in a $\braspFL$ program, which can further be expressed by a transformer in $\tfFL$.

We first formalize how $\brasp$ can compute sequence-to-sequence functions $\kleene{\alphabet} \to \kleene{\stackalphabet}$ if for instance, we want a $\brasp$ program to simulate an automaton (i.e., output the same states traversed by the automaton when ran on the same input). We designate a set of output vectors $\outputvec_\stacksym$, for all $\stacksym \in \stackalphabet$, and the sequence-to-sequence function then maps $\tstep$ to $\stacksym$ if $\outputvec_\stacksym(\tstep)$ is true.

We then say a $\braspFL$ program $\braspprog$ simulates an initialized semiautomaton $\wfsa = \satuple$ with initial state $\stateq_\vs$ iff for every input string $\str$, the output vectors of $\braspprog$ encode the sequence of states traversed by $\wfsa$ when ran on $\str$ with initial state $\stateq_\vs$.
\begin{lemma} \label{lem:single-half-resets}
    Let $\halfreset = (\alphabet_0 \cup \alphabet_1, \{\stateq_0, \stateq_1\}, \stateq_0, \trans)$ be a half-reset. Then there exists a $\brasp$ program $\braspprog_\halfreset$ that simulates $\halfreset$ started in start state $\stateq_0$.
\end{lemma}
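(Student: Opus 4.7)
The plan is to express the state of $\halfreset$ at each time step as a simple existential predicate over the prefix of the input, which is precisely the kind of computation that a single future-masked leftmost attention realizes in $\braspFL$. Since $\halfreset$ is started in $\stateq_0$ and the only way to leave $\stateq_0$ is to read a symbol in $\alphabet_1$ (after which every state is $\stateq_1$), the state after reading $\sym_1 \cdots \sym_\tstep$ is $\stateq_1$ iff at least one of $\sym_1, \ldots, \sym_\tstep$ lies in $\alphabet_1$, and $\stateq_0$ otherwise.

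Concretely, I would proceed in three steps. First, I would use a position-wise operation to form
\begin{equation}
\scorevec(\tstep) \;=\; \bigvee_{\syma \in \alphabet_1} \atomvec_\syma(\tstep),
\end{equation}
so that $\scorevec(\tstep) = 1$ iff $\sym_\tstep \in \alphabet_1$; this is a legal Boolean combination of the initial atomic vectors. Second, I would add a single future-masked leftmost attention that checks whether $\scorevec$ fires at any strictly earlier position,
\begin{equation}
\inductvec(\tstep) \;=\; \leftmost_{\tstep'}\bigl[\tstep' < \tstep,\; \scorevec(\tstep')\bigr] \; 1 : 0,
\end{equation}
so that $\inductvec(\tstep) = 1$ iff there exists $\tstep' < \tstep$ with $\sym_{\tstep'} \in \alphabet_1$, and $0$ otherwise. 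Third, I would designate the output vectors via a final position-wise operation:
\begin{equation}
\outputvec_{\stateq_1}(\tstep) \;=\; \scorevec(\tstep) \lor \inductvec(\tstep), \qquad \outputvec_{\stateq_0}(\tstep) \;=\; \neg\, \outputvec_{\stateq_1}(\tstep).
\end{equation}
By the observation above, $\outputvec_{\stateq_j}(\tstep) = 1$ exactly when $\halfreset$ is in state $\stateq_j$ after reading $\sym_1 \cdots \sym_\tstep$, giving the required sequence-to-sequence simulation.

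I do not expect any substantial obstacle, as a half-reset is essentially the smallest non-trivial computation expressible in $\ptl$, namely ``has an $\alphabet_1$-symbol ever occurred?'', and leftmost versus rightmost tiebreaking is irrelevant here because only the \emph{existence} of a witness matters, not its identity. The one minor point to spell out is the boundary case $\tstep = 1$: there are no valid attention positions, so $\inductvec(1)$ falls back to the default value $0$, and the correctness of the first position rests entirely on the $\scorevec(\tstep)$ disjunct in $\outputvec_{\stateq_1}$, which precisely captures whether $\sym_1 \in \alphabet_1$.
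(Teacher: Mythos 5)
Your proof is correct and follows essentially the same route as the paper: a single future-masked leftmost attention whose score fires on $\alphabet_1$-symbols detects whether the half-reset has left $\stateq_0$, with position-wise Boolean operations doing the rest. The only (harmless) difference is bookkeeping: the paper's vectors encode the state \emph{before} reading position $\tstep$ (deferring the ``after reading'' adjustment to the cascade/acceptance construction), whereas you fold the current symbol in immediately via the $\scorevec(\tstep)$ disjunct.
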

\begin{proof}
    We first define two $\brasp$ programs $\mathcal{B}_{0}$, $\mathcal{B}_{1}$, which return 1 if and only if the half-reset is in state $\stateq_0$ or $\stateq_1$, respectively. Namely: 
    \begin{equation}
        \begin{aligned}
            \mathcal{B}_{0}(\tstep) & = \leftmost_{\tstep'} [\tstep' < \tstep, \bigvee_{\sym \in \alphabet_1} \states_{\sym}(\tstep')] \; 0 : 1 \\
            \mathcal{B}_{1}(\tstep) & = 1 - \mathcal{B}_{0}(\tstep)
        \end{aligned}
    \end{equation}
    In half-resets, the presence or absence at any instance of symbols in $\alphabet_1$ determines whether we are in $\stateq_0$ or $\stateq_1$. 
    It suffices to check if at the current index $\tstep$, there has been a past $\sym \in \alphabet_1$, which can be done by computing $\bigvee_{\sym \in \alphabet_1} \states_{\sym}(\tstep')$ for all indices $\tstep'$ satisfying future masking.
\end{proof}

\begin{lemma} \label{lem:two-half-resets}
    Let $\wfsa = (\alphabet, \states_1, \trans_1)$ be a semiautomaton that can be simulated from state $\vs_1$ by a $\brasp$ program $\braspprog_\wfsa$. Let $\halfreset = (\states_1 \times \alphabet, \states_2, \trans_2)$ be a half-reset and let $\mymacro{ \mathcal{C}} = \wfsa \circ \halfreset$. Then, there exists a program $\braspprog_\mymacro{ \mathcal{C}}$ that simulates $\mymacro{ \mathcal{C}}$ started in state ($\vs_1, \vs_2$) for an arbitrary $\vs_2 \in \states_2$. 
\end{lemma}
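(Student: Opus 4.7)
The plan is to extend the given program $\braspprog_\wfsa$ with a few position-wise operations and a single additional leftmost-future attention that tracks $\halfreset$'s state, and then tensor the two state representations to obtain the cascade state. The key observation is that, under the indexing convention used in the proof of \cref{lem:single-half-resets}, the output vector $\outputvec_\stateq^\wfsa(\tstep)$ reports $\wfsa$'s state \emph{before} consuming $\sym_\tstep$, so it coincides with the first coordinate $\stateq_1^{(\tstep-1)}$ of the cascade input pair $\langle \stateq_1^{(\tstep-1)}, \sym_\tstep\rangle$ that $\halfreset$ reads at step $\tstep$. This lets us form the cascade input pair position-wise, and then reapply the trick of \cref{lem:single-half-resets} one more time to detect whether any past cascade input has already driven $\halfreset$ into its absorbing state.

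Concretely, let $\alphabet_1'\subseteq\states_1\times\alphabet$ be the subset of cascade inputs on which $\trans_2(\stateq_0,\cdot)=\stateq_1$. I would first add to $\braspprog_\wfsa$ the position-wise vectors
\begin{equation}
    \inductvec_{\langle\stateq,\sym\rangle}(\tstep) \defeq \outputvec_\stateq^\wfsa(\tstep)\land \atomvec_\sym(\tstep), \qquad \inductvec_{\alphabet_1'}(\tstep) \defeq \bigvee_{\langle\stateq,\sym\rangle\in\alphabet_1'}\inductvec_{\langle\stateq,\sym\rangle}(\tstep),
\end{equation}
then, mirroring \cref{lem:single-half-resets} verbatim, add
\begin{equation}
    \inductvec_{\stateq_0}^\halfreset(\tstep) \defeq \leftmost_{\tstep'}[\tstep'<\tstep,\inductvec_{\alphabet_1'}(\tstep')]\;0:1, \qquad \inductvec_{\stateq_1}^\halfreset(\tstep) \defeq \neg\inductvec_{\stateq_0}^\halfreset(\tstep),
\end{equation}
when $\vs_2=\stateq_0$; when $\vs_2=\stateq_1$, $\halfreset$ is trapped in $\stateq_1$ forever, so I set $\inductvec_{\stateq_0}^\halfreset\equiv 0$ and $\inductvec_{\stateq_1}^\halfreset\equiv 1$ by trivial position-wise tautologies on $\{\atomvec_\sym\}_{\sym\in\alphabet}$. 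Finally, define the cascade output vectors $\outputvec_{\langle\stateq_1,\stateq_2\rangle}^{\mymacro{ \mathcal{C}}}(\tstep) \defeq \outputvec_{\stateq_1}^\wfsa(\tstep)\land \inductvec_{\stateq_2}^\halfreset(\tstep)$ for each $(\stateq_1,\stateq_2)$ in the state set of $\mymacro{ \mathcal{C}}$.

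Correctness then reduces to a short induction on $\tstep$: by the inductive hypothesis on $\braspprog_\wfsa$ and the position-wise conjunction, $\inductvec_{\alphabet_1'}(\tstep')=1$ is exactly the indicator that the cascade transition at step $\tstep'$ sends $\halfreset$ from $\stateq_0$ to $\stateq_1$, and the leftmost-future attention with strict mask $\tstep'<\tstep$ therefore reports whether such a transition has already occurred strictly before position $\tstep$. The main obstacle I anticipate is bookkeeping around indexing: one has to verify that the ``state before reading $\sym_\tstep$'' convention propagates consistently through the cascade---in particular, that the combined input pair really lives at position $\tstep$ and that the default branch of the attention correctly handles the boundary at $\tstep=1$, where no past cascade input exists and the initial component $\vs_2$ must still be reported. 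A useful byproduct is that every added operation lies in $\braspFL$, so iterating this lemma along a chain of half-resets preserves $\braspFL$-membership, which is exactly what the overall \pofaAcr-to-$\braspFL$ translation requires.
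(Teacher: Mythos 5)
Your proposal is correct and follows essentially the same route as the paper's proof: form the cascade-input-pair predicates position-wise by conjoining the simulated $\wfsa$-state indicators with the atomic symbol vectors, reapply the construction of \cref{lem:single-half-resets} to those pair predicates to track $\halfreset$'s state, and conjoin the two state indicators to obtain the cascade state. Your explicit handling of the trivial $\vs_2=\stateq_1$ case and of the $\tstep=1$ boundary (via the default value) is a harmless elaboration of details the paper leaves implicit.
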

\begin{proof}
    We now use for all $\stateq \in \states_1$, the predicates $\mathcal{B}_{\wfsa, \stateq}$ that denote whether $\wfsa$ is in some state $\stateq \in \states_1$ at time $\tstep$ when started at $\vs_1$. 
    By the assumption that $\wfsa$ can be simulated by a $\braspFL$ program, we have access to such predicates.
    
    We now define predicates for $\stateq \in \states_1$, $\sym \in \alphabet$:
    \begin{equation}
        \begin{aligned}
            \states'_{(\stateq, \sym)}(\tstep) & = \mathcal{B}_{\wfsa, \stateq}(\tstep) \wedge \states_{\sym}(\tstep)
        \end{aligned}
    \end{equation}
    These formulas encode the presence of an element in $\states_1 \times \alphabet$ (a state in the first semiautomaton $\wfsa$ and an alphabet symbol) for the half-reset $\halfreset$.
    
    As $\halfreset$ is a half-reset, we can classify every tuple $(\stateq, \sym) \in \states_1 \times \alphabet$ into one of two sets $\alphabet_0$ or $\alphabet_1$ (depending on if they reset to $\stateq_0$ or $\stateq_1$ in $\halfreset$). 
    As in \cref{lem:single-half-resets}, we can define predicates $\mathcal{B}_{\halfreset, \stateq}(\tstep)$ using the predicates $\states'_{(\stateq, \sym)}$ and the known classification of elements in $\states_1 \times \alphabet$ into some state $\stateq_0$ or $\stateq_1$.
    
    To finally simulate the cascade product $\mymacro{ \mathcal{C}} = \wfsa \circ \halfreset$, we define predicates that compute for every state $(\stateq, \statep)$, $\stateq \in \states_1$, $\statep \in \states_2$, whether $\mymacro{\mathcal{C}}$ is in it:
    \begin{equation}
        \begin{aligned}
            \mymacro{\mathcal{C}}_{(\stateq, \statep)}(\tstep) = \mathcal{B}_{\wfsa, \stateq}(\tstep) \wedge \mathcal{B}_{\halfreset, \statep}(\tstep)
        \end{aligned}
    \end{equation}
\end{proof}

\begin{restatable}{theorem}{poFSATOfuncormerTheorem}\label{thm:pofsa-to-transformer}
    Let $\wfsa$ be a \pofaAcr{}.
    Then, there exists an equivalent transformer $\tf \in \tfFL$.
\end{restatable}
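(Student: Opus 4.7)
The plan is to chain together three existing results: the structural characterization of \pofaAcr{}s via cascades of half-resets from Theorem \ref{thm:rtrivial}, the $\braspFL$-simulability of such cascades from Lemmas \ref{lem:single-half-resets} and \ref{lem:two-half-resets}, and the $\braspFL$-to-$\tfFL$ translation from Theorem \ref{thn:braspFL-tfFL-equivalence}. Since $\wfsa$ is a \pofaAcr{}, the language $\langFun{\wfsa}$ has an $\gR$-trivial syntactic monoid, so applying Theorem \ref{thm:rtrivial} to the minimal DFA $\wfsa_{\min}$ of $\langFun{\wfsa}$ (which is itself a \pofaAcr{}) yields a cascade product $\mathcal{C} = \halfreset_1 \circ \halfreset_2 \circ \cdots \circ \halfreset_k$ of half-resets together with a surjective automaton homomorphism $\homomorphism$ from $\mathcal{C}$ onto $\wfsa_{\min}$. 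Lifting finality along $\homomorphism$ produces a set $\final_\mathcal{C} \subseteq \states_\mathcal{C}$ such that $\mathcal{C}$ accepts exactly $\langFun{\wfsa}$.

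Next, I would inductively build a $\braspFL$ program $\braspprog_\mathcal{C}$ simulating $\mathcal{C}$: the base case is Lemma \ref{lem:single-half-resets} applied to $\halfreset_1$, and the inductive step uses Lemma \ref{lem:two-half-resets} to extend a simulation of $\halfreset_1 \circ \cdots \circ \halfreset_i$ by $\halfreset_{i+1}$. Every operation introduced by those lemmas uses only $\leftmost$ attention under strict future masking, so the overall program stays in $\braspFL$. This yields, for each $\stateq \in \states_\mathcal{C}$, a Boolean vector $\mathcal{C}_\stateq$ tracking whether $\mathcal{C}$ is in state $\stateq$ at each position. Taking the output vector $\outputvec(\tstep) := \bigvee_{\stateq \in \final_\mathcal{C}} \mathcal{C}_\stateq(\tstep)$, a position-wise Boolean combination, yields a $\braspFL$ program for $\langFun{\wfsa}$. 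A final application of Theorem \ref{thn:braspFL-tfFL-equivalence} converts $\braspprog_\mathcal{C}$ into an equivalent transformer in $\tfFL$.

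The main obstacle I anticipate is a mild off-by-one issue at the output position. Because the predicates in Lemma \ref{lem:single-half-resets} use strict future masking, $\mathcal{C}_\stateq(\tstep)$ records the state of the cascade \emph{just before} consuming $\sym_\tstep$, so naively $\outputvec(\strlen)$ would indicate acceptance of the prefix $\sym_1 \cdots \sym_{\strlen-1}$ rather than of $\str$. The fix is to fold in one additional transition at the output position via a position-wise Boolean update: since each $\halfreset_i$'s transition depends only on its current state and the current input symbol, both of which are available at position $\strlen$ through $\{\mathcal{C}_\stateq(\strlen)\}_\stateq$ and $\{\atomvec_\sym(\strlen)\}_\sym$, the corrected final-state indicator is expressible as a Boolean formula in these vectors and thus remains in $\braspFL$.
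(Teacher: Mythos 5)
Your proposal is correct and takes essentially the same route as the paper's proof: decompose the \pofaAcr{} into a cascade of half-resets via \cref{thm:rtrivial}, simulate the cascade in $\braspFL$ using \cref{lem:single-half-resets,lem:two-half-resets}, repair the strict-masking off-by-one by folding in one last transition as a position-wise Boolean combination of the state indicators and the current-symbol vectors, and conclude with \cref{thn:braspFL-tfFL-equivalence}. The only cosmetic difference is that you lift finality to the cascade's states, whereas the paper first recovers $\wfsa$'s states through the homomorphism $\homomorphism$ and then applies $\wfsa$'s transition at the final position via its $\outputvec_\stateq$ formulas; the two arrangements are equivalent.
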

\begin{proof}
    Let $\mymacro{ \mathcal{C}} = \mathcal{B}_0 \circ \cdots \circ \mathcal{B}_{\idxk}$ the semiautomaton $\wfsa$ is homomorphic to. 
    Let $\homomorphism \colon \states' \to \states$ be the homomorphism from $\mymacro{ \mathcal{C}}$ to $\wfsa$ (where $\states$ are the states of $\wfsa$, $\states'$ are the states of $\mymacro{ \mathcal{C}}$). 
    By \cref{lem:two-half-resets}, we can iteratively define formulas $\mymacro{\mathcal{C}}_{\stateq'}$ for all $\stateq' \in \states'$ that simulate the semiautomaton $\mymacro{\mathcal{C}}$. If we write instructions describing the homomorphism $\homomorphism$, we can then write formulas that yield the states traversed by $\wfsa$ as:
    \begin{equation}
        \wfsa_\stateq(\tstep) = \bigvee_{\statep \in \states', \homomorphism(\statep) = \stateq} \mymacro{\mathcal{C}}_{\statep}(\tstep)
    \end{equation}
$\wfsa_\stateq(\tstep)$, however, describes the state \textit{before reading the symbol at $\tstep$} (by strict masking). We want predicates that yield the state \textit{after reading at the symbol position $\tstep$}. We thus write:
    \begin{equation}
        \outputvec_\stateq(\tstep) = \bigvee_{\substack{\statep \in \states, \sym \in \alphabet \\ \trans(\statep, \sym)=\stateq}} \wfsa_\statep(\tstep) \land \states_\sym(\tstep)
    \end{equation}
    Finally, we denote by $\final$ the set of final states in $\wfsa$. 
    We thus define the output vector $\outputvec$ by:
    \begin{equation}
        \outputvec(\tstep) = \bigvee_{\stateq \in \final}\outputvec_\stateq(\tstep)
    \end{equation}
    $\outputvec(\length) = 1$ if and only if $\wfsa$ is in one of the final states when reading the entire string. 
    This concludes the translation of $\wfsa$ to $\braspFL$, showing the existence of a $\tf \in \tfFL$ equivalent to $\wfsa$.
\end{proof}

\section{Direct Translations of $\tfFL$} \label{sec:direct-proofs}

\subsection{A Normal Form for $\tfFL$ Transformers} \label{app:transformers-normal-form} 
The representational capacity of transformers depends tightly on the modeling assumptions. 
\cref{sec:tfFL-braspFL} studies the same architecture as \citet{yang2024maskedhardattentiontransformersrecognize}.
In this section, we outline a formalization of $\tfFL$ transformers in a form that allows us to describe direct translations to $\ptl$ and \pofaAcr{}s in \cref{sec:direct-translations} more easily.
The idea of this is similar to the normal form of \citet{hao2022formallanguagerecognitionhard}.

\subsubsection{Finite Precision and Simplifications} \label{sec:finite-precision}
As \citet{yang2024maskedhardattentiontransformersrecognize}, we work with finite-precision transformers with no positional encodings. 
Further, we focus on strict future masking and leftmost hard attention, defined formally below.

In our formalization, we omit element-wise transformations such as the query, key, and value transformations and element-wise MLPs.
Instead, we directly aggregate the original representations into (constant-size) vectors that can take increasingly many values as the number of layers increases.\footnote{This is formalized in \cref{lem:number-of-representations-per-layer}.}
The intuition and motivation for this stems from the use of hard attention:
At any layer, hard attention augments the current contextual representation with the representation of one (preceding) symbol---the $\hardmax$.
Thus, all information that the transformer can capture is already captured if the contextual representations keep around the identities of the elements that were returned by $\hardmax$.
The element-wise representations, in this case, do not provide any additional representational power.
We elaborate on this in \cref{sec:uha}.

Note that we do not use layer normalization in our transformers and focus on a single-head attention mechanism.
Both can be incorporated into our proofs analogously to the solutions presented in \citet{yang2024maskedhardattentiontransformersrecognize} (see \citet[][\S 4.1]{yang2024maskedhardattentiontransformersrecognize} for a discussion of multi-head attention and \citet[][\S 4.3]{yang2024maskedhardattentiontransformersrecognize} for layer normalization).

\subsubsection{The Attention Mechanism}
The central component of a transformer is the \defn{transformer layer}.
\begin{definition}
    A \defn{transformer layer} $\tflayer\colon \kleeneplus{\mleft(\R^\hiddDim\mright)} \to \kleeneplus{\mleft(\R^\hiddDim\mright)}$ is a length-preserving function defined as 
    \begin{subequations}
        \begin{align}
            \tflayer(\mleft(\vx_1, \ldots, \vx_\length\mright)) &\defeq \mleft(\vy_\tstep\mright)_{\tstep = 1}^\length \\
            \mleft(\vy_1, \ldots, \vy_\length\mright) &\defeq \attnFun{\mleft(\vx_1, \ldots, \vx_\length\mright)} + \mleft(\vx_1, \ldots, \vx_\length\mright) \nonumber
        \end{align}
    \end{subequations}
    We will use $\tflayerPos{\tstep} \defeq \projfunc{\tstep} \circ \tflayer$ for the function that computes $\tflayer(\mleft(\vx_1, \ldots, \vx_\length\mright))$ and extracts the contextual representation of the $\tstep\textsuperscript{th}$ symbol by projecting out that dimension.
\end{definition}
The \defn{attention mechanism} $\attn$ is specified with the following components:
\begin{itemize}[nosep, noitemsep]
    \item A \defn{scoring function} $\scoref\colon \R^\hiddDim \times \R^\hiddDim \to \R$.
    \item A \defn{masking function} $\mask\colon \N \times \N \to \set{0,1}$ that determines the positions attended to. We use future and past masking as in \brasp. 
    We write $\unmaskedSetFun{\tstep} \defeq \{\tstep' \mid \maskFun{\tstep, \tstep'} = 1\}$.
    \item A \defn{normalization function} $\normf\colon \R^\length \to \simplexFun{\length - 1}$ that normalizes the attention scores.
\end{itemize}
We then define the attention mechanism as:
\begin{subequations}
    \begin{align}
        \attnFun{\mleft(\vx_1, \ldots, \vx_\length\mright)} &\defeq \mleft(\vy_1, \ldots, \vy_\length\mright) \\
        \vy_\tstep &\defeq \sum_{\tstep' \in \unmaskedSetFun{\tstep}} \evs_{\tstep'} \vx_{\tstep'} \label{eq:attention-sum} \\
        \vs &\defeq \normf((\scoref\mleft(\vx_{\tstep'}, \vx_\tstep\mright))_{\tstep' \in \unmaskedSetFun{\tstep}}) \label{eq:attention-norm}
    \end{align}
\end{subequations}

The $\sum$-notation in \cref{eq:attention-sum} can naturally be thought of as \emph{collecting} information from the unmasked positions.
Intuitively, if the space of contextual representations is large enough, this can be interpreted as \emph{concatenating} the representations together.
This will be particularly useful in our \uha formulation, where only a single $\vx_{\tstep'}$ will be selected and the finiteness of the representation space will mean that \emph{all} relevant information about the string $\strlet$ will be stored in $\vx_\tstep$.
See also \cref{sec:uha}.

Let $\nLayers \in \NgtZero$ and $\tflayer^{(1)}, \ldots, \tflayer^{(\nLayers)}$ be transformer layers.
A \defn{transformer} $\tf\colon \kleene{\alphabet} \to (\R^{\hiddDim})^{+}$ is a composition of transformer layers:
\begin{equation} \label{eq:transformer-layer-composition}
    \tf \defeq \tflayer^{(\nLayers)} \circ \cdots \circ \tflayer^{(1)} \circ \embedFunc
\end{equation}
where $\embedFunc\colon \kleene{\alphabet} \to \kleeneplus{\mleft(\R^\hiddDim\mright)}$ is a position-wise embedding function that maps symbols to their static representations.

We also write 
\begin{equation}
    (\vx^{(\layerIdx)}_{1}, \ldots, \vx^{(\layerIdx)}_{\length}) = \mleft(\tflayer^{(\layerIdx)} \circ \cdots \circ \tflayer^{(1)} \circ \embedFunc\mright) \mleft(\str\mright)
\end{equation}
for some layer index $\layerIdx \in \NTo{\nLayers}$ and string $\str = \sym_1 \cdots \sym_\length \in \kleene{\alphabet}$.
We call $\vx^{(\layerIdx)}_{\tstep}$ the \defn{contextual representation} of symbol $\sym_\tstep$ at layer $\layerIdx$.

A transformer computes the contextual representations of the string $\str = \sym_1 \cdots \sym_{\length} \eos$ as 
\begin{equation}
    (\vx^{(\nLayers)}_{1}, \ldots, \vx^{(\nLayers)}_{\length}, \vx^{(\nLayers)}_{\eos}) \defeq \tf\mleft(\str\mright).
\end{equation}
We take $\vx^{(\layerIdx)}_{\eos}$ to be the representation of the entire string.
This motivates the definition of the \defn{transformer encoding function} $\tfencfun$:
\begin{equation} \label{eq:enc}
    \tfencfun\mleft(\str\mright) \defeq \vx_{\eos}^{(\nLayers)}.
\end{equation}
This allows us to define a transformer's language.
This is usually defined based on a linear classifier based on $\tfencfun\mleft(\str\mright)$:
\begin{equation}
    \langFun{\tf} \defeq \set{\str \in \kleene{\alphabet} \mid \vtheta^\top \tfencfun\mleft(\str\mright) > 0}
\end{equation}
for some $\vtheta \in \R^\hiddDim$.
Since we are working with finite-precision transformers, the set of possible $\tfencfun\mleft(\str\mright)$ is finite (in our normal form, it is a subset of $\alphabet^{2^{\nLayers}}$).
We can thus equate the condition $\vtheta^\top \tfencfun\mleft(\str\mright) > 0$ with $\tfencfun\mleft(\str\mright)$'s inclusion in a subset of $\alphabet^{2^{\nLayers}}$ and define the language of a transformer as follows.
\begin{definition} \label{def:transformer-language}
    Let $\tf \in \tfFL$.
    We define its language $\langFun{\tf}$ as 
    \begin{equation} \label{eq:transformer-acc}
        \langFun{\tf} \defeq \set{\str \in \kleene{\alphabet} \mid \tfencfun\mleft(\str\mright) \in \sF_\tf}
    \end{equation}
    where $\sF_\tf$ is a set of accepting final representations.
\end{definition}

\subsubsection{Unique Hard Attention} \label{sec:uha}
Let $\tiebreak \in \set{\min, \max}$ and let $\vs \in \R^N$.
We define 
\begin{equation}
    \hardmax\mleft(\vs\mright)_n \defeq \begin{cases}
        1 &\ifcondition n = \tiebreak( \argmax\mleft(\vs\mright))\\
        0 &\otherwisecondition
    \end{cases}
\end{equation}
Here, $\argmax\vs$ denotes the set of indices attaining the maximum value in the vector $\vs$. The function $\tiebreak$ selects a unique index from this set: $\tiebreak = \max$ corresponds to \defn{rightmost tiebreaking} $\rightmost$ and $\tiebreak = \min$ corresponds to \defn{leftmost tiebreaking} $\leftmost$. 
\begin{definition}  \label{def:unique-hard-attention}
    \defn{Unique hard attention} is computed with the $\hardmax$ normalization function, i.e., $\normf = \hardmax$ in \cref{eq:attention-norm}.
\end{definition}
With some abuse of notation, we sometimes write $\hardmax\mleft(\vs\mright)$ for the \emph{position} $\tiebreak( \argmax\mleft(\vs\mright))$.

We denote future or past masking with $\futuremask$ or $\pastmask$, respectively. 
$\tfFL$, for example, denotes the class of transformers with future masking and leftmost attention.

The following lemma is a restatement of \citet[][Lem. 22]{yang2024maskedhardattentiontransformersrecognize}.
\begin{restatable}{lemma}{nbRepresentationsLemma} \label{lem:number-of-representations-per-layer}
    Let \tf be a \uha transformer over $\alphabet$.
    Denote with $\vx^{(\layerIdx)}_\tstep$ the contextual representation of the symbol $\sym_\tstep$ at layer $\layerIdx$.
    The following holds:
    \begin{equation}
        |\set{\vx^{(\layerIdx)}_\tstep \mid \str \in \kleene{\alphabet}, \tstep \in \NTo{|\str|}}| \leq \nsymbols^{2^{\layerIdx}}.
    \end{equation}
\end{restatable}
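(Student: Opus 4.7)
The plan is to prove the lemma by induction on $\layerIdx$, bounding the cardinality of the set
\begin{equation*}
V_\layerIdx \defeq \set{\vx^{(\layerIdx)}_\tstep \mid \str \in \kleene{\alphabet}, \tstep \in \NTo{|\str|}}
\end{equation*}
of distinct contextual representations attainable at layer $\layerIdx$.

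For the base case ($\layerIdx = 0$), each representation $\vx^{(0)}_\tstep = \embedFunc(\sym_\tstep)$ is uniquely determined by the symbol $\sym_\tstep \in \alphabet$, so $|V_0| \leq \nsymbols = \nsymbols^{2^0}$. For the inductive step, the key observation---which is precisely where the unique hard attention assumption enters---is that the attention output at position $\tstep$ in layer $\layerIdx+1$ selects a single element of $V_\layerIdx$, namely $\vx^{(\layerIdx)}_{\tstep^*}$ with $\tstep^* = \tiebreak\mleft(\argmax_{\tstep' \in \unmaskedSetFun{\tstep}} \scoref(\vx^{(\layerIdx)}_{\tstep'}, \vx^{(\layerIdx)}_\tstep)\mright)$. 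Combined with the residual connection, the updated representation takes the form
\begin{equation*}
\vx^{(\layerIdx+1)}_\tstep = \vx^{(\layerIdx)}_\tstep + \vx^{(\layerIdx)}_{\tstep^*},
\end{equation*}
and is therefore entirely determined by the pair $\mleft(\vx^{(\layerIdx)}_\tstep, \vx^{(\layerIdx)}_{\tstep^*}\mright) \in V_\layerIdx \times V_\layerIdx$. Counting pairs and invoking the inductive hypothesis yields
\begin{equation*}
|V_{\layerIdx+1}| \leq |V_\layerIdx|^2 \leq \mleft(\nsymbols^{2^\layerIdx}\mright)^2 = \nsymbols^{2^{\layerIdx+1}},
\end{equation*}
which closes the induction.

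The one bookkeeping subtlety I would need to address is the edge case $\unmaskedSetFun{\tstep} = \emptyset$ (e.g., at $\tstep = 1$ under strict future masking), where no attention target exists. I would resolve this by adopting a canonical convention---for instance, defining the attention output in the empty case to equal $\vx^{(\layerIdx)}_\tstep$ itself, so the update becomes a function of $\mleft(\vx^{(\layerIdx)}_\tstep, \vx^{(\layerIdx)}_\tstep\mright) \in V_\layerIdx \times V_\layerIdx$, or by prepending a distinguished $\bos$ marker to every input so that the unmasked set is always non-empty at interior positions. Either convention preserves the bound. The conceptual core of the proof is the single observation that unique hard attention selects exactly one element of $V_\layerIdx$ per position, which causes the exponent to double at each layer; no finer combinatorial analysis is required, so I do not anticipate any substantive obstacle beyond fixing the boundary convention.
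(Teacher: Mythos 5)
Your proof is correct and follows essentially the same argument as the paper's: induction on the layer index, with the key observation that unique hard attention plus the residual connection makes $\vx^{(\layerIdx+1)}_\tstep$ a function of a pair of layer-$\layerIdx$ representations, so the count at most squares each layer. Your explicit treatment of the empty-attention-set boundary case is a small extra care the paper's proof glosses over, but it does not change the argument.
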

\begin{proof}
    We prove the statement by induction on the number of layers.
    
    \paragraph{Base case:} $\layerIdx = 1$. 
    In the first layer, as we have static representations for symbols, the embedding at some position $\tstep$ is uniquely determined by the symbol $\sym_\tstep$ at that position. We thus have exactly $\nsymbols$ possible representations for a given position, regardless of the length of the string. 
    
    \paragraph{Inductive step:} $\layerIdx > 1$. 
    Suppose that the invariance holds for $\layerIdx - 1$: $|\set{\vx^{(\layerIdx - 1)}_\tstep \mid \str \in \kleene{\alphabet}, \tstep \in \NTo{|\str|}}| \leq \nsymbols^{2^{\layerIdx - 1}}$.
    For any position $\tstep$ in the string, at layer $\layerIdx$, we will compute $\vx^{(\layerIdx)}_\tstep = \attnFun{\mleft(\vx_1, \ldots, \vx_\length\mright)} + \mleft(\vx_1, \ldots, \vx_\length\mright)$ where $\vx^{(\layerIdx-1)}_\tstep$ is the representation of the symbol at the previous layer $\layerIdx-1$. 
    By the induction hypothesis, the element $\vx^{(\layerIdx-1)}_\tstep$ takes one out of at most $\nsymbols^{2^{\layerIdx-1}}$ possible values. 
    Moreover, the attention mechanism selects one element from the previous layer $\layerIdx-1$, which holds one out of at most $\nsymbols^{2^{\layerIdx-1}}$ possibles values by the induction hypothesis. 
    The element $\va_\tstep$ holds thus one out of $\nsymbols^{2^{\layerIdx-1}} \times \nsymbols^{2^{\layerIdx-1}} = \nsymbols^{2^{\layerIdx}}$ representations, concluding the induction step and the proof.
\end{proof}

\paragraph{Contextual representations as elements of a finite set.}
\cref{lem:number-of-representations-per-layer} allows us to simplify notation: Any representation of a symbol at layer $\layerIdx$ is uniquely identified by $2^{\layerIdx}$ symbols, i.e., $\vx^{(\layerIdx)} \in \alphabet^{2^{\layerIdx}}$.
We think of this as the collection of representations attended to at each of layer $\layerIdx' < \layerIdx$, since each selects a single position to be added to the current representation.
We will thus refer to $\vx^{(\layerIdx)}$ as elements of $\alphabet^{2^{\layerIdx}}$.

\subsubsection{An Invariance}
In this section and in \cref{subsec:proofs_tffl_as_ptl}, we use $\alphabet$ for the alphabet of input symbols and $\generalAlphabet$ for a general alphabet (finite set) of relevance.
Later, $\generalAlphabet$ will correspond to sets of the form $\alphabet^{2^\layerIdx}$ for some $\layerIdx \in \N$.

\begin{definition}
    Let $\generalAlphabet$ be an alphabet and $\str \in \kleene{\generalAlphabet}$.
    We define the \defn{symbol order} $\symOrdFun{\str}$ of $\str$ as the string obtained from $\str$ by keeping only the first occurrence of each symbol.
\end{definition}
We define the following relation on $\kleene{\generalAlphabet}$:
\begin{equation}
    \str \ordEq \str' \iff \symOrdFun{\str} = \symOrdFun{\str'}.
\end{equation}
It is not hard to verify that $\ordEq$ is an equivalence relation on $\kleene{\generalAlphabet}$ and to verify that $|\kleene{\generalAlphabet}/\ordEq| = |\orderedSubsets{\generalAlphabet}|$, where $\orderedSubsets{\generalAlphabet}$ is the (finite) set of all ordered subsets of $\generalAlphabet$.
We denote the equivalence class of $\str \in \kleene{\generalAlphabet}$ by $\eqclass{\str} \in \kleene{\generalAlphabet}/\ordEq$.
We have the following important invariance.
\begin{restatable}[Attention invariance]{lemma}{outputDependenceLemma} \label{lem:dependence}
    Let $\tflayer$ be an $\tfFL$ transformer layer over $\generalAlphabet$.
    For any $\str, \str' \in \kleene{\generalAlphabet}$, if $\str \ordEq \str'$, then $\tflayerPos{|\str|}(\str) = \tflayerPos{|\str'|}(\str')$.
\end{restatable}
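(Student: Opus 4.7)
The plan is to decompose the layer output at the last position as $\tflayerPos{|\str|}(\str) = \vx_{|\str|} + \attnFun{\str}_{|\str|}$ and to show that both summands depend on $\str$ only through $\symOrdFun{\str}$, so that $\symOrdFun{\str} = \symOrdFun{\str'}$ forces the claimed equality. The decisive observation is that, under leftmost tiebreaking, the hard-attention output at the final position reduces to a function of just the query symbol $\sym_{|\str|}$ together with the \emph{set} of symbols occurring in the prefix $\str_{<|\str|}$; information about multiplicities and non-first positions is irrelevant.

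To make this precise, I would first invoke \cref{lem:number-of-representations-per-layer} to identify each input vector $\vx_{\tstep'}$ with its element of $\generalAlphabet$, so that the attention score reduces to the symbol-dependent form $\scoref(\sym_{\tstep'},\sym_{|\str|})$. Fixing the query $\sym_{|\str|}$, all prefix positions carrying the same symbol receive identical scores, and so leftmost tiebreaking picks the \emph{leftmost occurrence} in $\str_{<|\str|}$ of whichever symbol $\sigma^\star$ maximizes $\scoref(\cdot,\sym_{|\str|})$ over the set $S \defeq \{\sym_{\tstep'} : \tstep' < |\str|\}$ of distinct symbols appearing in the prefix. The attended value is therefore $\valuevec(\sigma^\star)$, a function only of $\sym_{|\str|}$ and $S$, and not of how often or where each symbol in $S$ is repeated.

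It then remains to show that both $\sym_{|\str|}$ and $S$ are invariants of the $\ordEq$-class. By construction of $\symOrdFun$, the set of distinct symbols of $\str$ coincides with the character set of $\symOrdFun{\str}$, and is therefore shared with $\str'$; a short case analysis on whether the last symbol is a fresh first occurrence (and hence the terminal character of $\symOrdFun{\str}$) or a repetition then yields $\sym_{|\str|}=\sym'_{|\str'|}$ and $S=S'$. This bookkeeping step is the main obstacle I expect, and it is also precisely where leftmost tiebreaking is essential: under rightmost tiebreaking, the selected position would be the \emph{rightmost} occurrence of $\sigma^\star$ in the prefix, whose location depends on information that $\symOrdFun$ discards, and one could easily construct pairs $\str \ordEq \str'$ on which the two outputs diverge. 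Once $\sym_{|\str|}=\sym'_{|\str'|}$ and $S=S'$ are in hand, the residuals agree and the attention outputs agree, yielding $\tflayerPos{|\str|}(\str) = \tflayerPos{|\str'|}(\str')$ and completing the argument.
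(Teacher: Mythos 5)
Your core idea is the same one the paper uses: its entire proof is the observation that, with strict future masking and leftmost tiebreaking, occurrences of a symbol other than its first one never change which representation is attended to, so the output at the final position is unaffected. Where your write-up breaks is exactly the step you flag as the main obstacle: from $\str \ordEq \str'$ you cannot conclude $\sym_{|\str|}=\sym'_{|\str'|}$ or $S=S'$. Take $\str=\syma\symb$ and $\str'=\syma\symb\syma$: then $\symOrdFun{\str}=\symOrdFun{\str'}=\syma\symb$, yet the final symbols are $\symb$ and $\syma$ and the prefix symbol sets are $\set{\syma}$ and $\set{\syma,\symb}$. No case analysis on ``fresh first occurrence vs.\ repetition'' can close this, because the two strings may fall into different cases; indeed, for any layer whose representations separate $\syma$ from $\symb$, the outputs on this pair already differ in their residual components, so the literal equality $\tflayerPos{|\str|}(\str)=\tflayerPos{|\str'|}(\str')$ is not derivable from the stated hypothesis alone. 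The paper's one-sentence proof glosses over the same point by speaking of ``the final symbol'' as if it were common to both strings.

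What your argument actually establishes---and what is needed downstream, e.g.\ in the \pofaAcr{} construction of \cref{app:transformer-to-pofsa}, where $\tffuncFun{\vx}$ is evaluated on the stored ordered set of prefix representations with the query supplied separately---is the fixed-query form of the invariance: $\tflayerPos{|\str|}(\str)$ is a function of the pair $\left(\sym_{|\str|},\, \symOrdFun{\str_{<|\str|}}\right)$, i.e.\ if $\sym_{|\str|}=\sym'_{|\str'|}$ and $\str_{<|\str|}\ordEq\str'_{<|\str'|}$ then the two outputs agree. Note that merely adding $\sym_{|\str|}=\sym'_{|\str'|}$ to the lemma's hypothesis is still insufficient ($\syma\symb$ vs.\ $\syma\symb\syma\symb$ share symbol order and final symbol, but only the longer prefix contains $\symb$, so the attended value can change); the invariance must be phrased on the strict prefix. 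One further small repair: the attended value is not a function of the unordered set $S$ alone---when several distinct symbols tie at the maximal score, leftmost tiebreaking selects the one whose \emph{first occurrence} is earliest---so the dependence should be stated in terms of $\symOrdFun{\str_{<|\str|}}$ rather than $S$. With the hypothesis corrected in this way and the dependence stated on the ordered first-occurrence sequence, your argument coincides with the paper's intended one.
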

\begin{proof}
    This follows directly from the definition of leftmost hard attention: Additional occurrences of symbols to the right of the first occurrence do not change the position attended to, meaning that the output at the final symbol is the same.
\end{proof}

\subsection{$\tfFL$ as $\ptl$} \label{subsec:proofs_tffl_as_ptl}

We view a transformer layer as a function that takes a contextual representation $\vx_\tstep^{(\layerIdx)}$ and returns a function that takes in an ordered set of contextual representations $\symOrdFun{\str}$ with $\str \in \kleene{\generalAlphabet}$ and returns the representation chosen by the unique hard attention mechanism.
\begin{subequations}
    \begin{align}
        \tflayer^{(\layerIdx)}&\colon \generalAlphabet \to \maps{\orderedSubsets{\generalAlphabet}}{\generalAlphabet}, \\
        \tflayer^{(\layerIdx)}&\colon \vx \mapsto \tffuncFun{\vx}.
    \end{align}
\end{subequations}
$\tffuncFun{\vx}$ is the function that takes an ordered set of contextual representations $\sX = \mleft(\vx_1, \ldots, \vx_N\mright)$ and returns the representation chosen by \uha:\footnote{We omit the dependence of $\tffunc$ on the layer index for readability.}
\begin{subequations}
    \begin{align}
        \tffuncFun{\vx}&\colon \orderedSubsets{\generalAlphabet} \to \generalAlphabet, \\
        \tffuncFun{\vx}&\colon \sX \mapsto \vx_{\hardmax((\scoref\mleft(\vx_\tstep, \vx_{\tstep'}\mright))_{\vx_{\tstep'} \in \sX})}. \label{eq:tflay-func}
    \end{align}
\end{subequations}
We also define
\begin{subequations}
    \begin{align}
        \vx'' \preceq_\vx \vx' &\iff \scoref\mleft(\vx, \vx''\mright) \leq \scoref\mleft(\vx, \vx'\mright) \\
        \vx'' \simeq_\vx \vx' &\iff \scoref\mleft(\vx, \vx''\mright) = \scoref\mleft(\vx, \vx'\mright) \\
        \vx'' \prec_\vx \vx' &\iff \vx'' \preceq_\vx \vx' \text{ and } \neg \left(\vx'' \simeq_\vx \vx'\right).
    \end{align} 
\end{subequations}

\transformerToLTLTheorem*
\begin{proof} \label{constr:transformer-to-ltl}
    We define an $\ptl$ formula representing $\tf \in \tfFL$  with the attention mechanism implemented by the function $\tffuncFun{\vx}$ (cf. \cref{eq:tflay-func}) by specifying a set of formulas representing each layer, starting with the initial one, and continuing inductively.
    At a high level, we define the formulas $\formulaAdd{\layerIdx}{\vx}{\vy}$ to mean that the contextual representation $\vy$ is added to the contextual representation $\vx$ at layer $\layerIdx$, i.e., that $\vy$ is the maximizer of the query-key score when $\vx$ is the query: $\tffuncFun{\vx}\mleft(\sX\mright) = \vy$ for $\sX$ representing the current string.
    We construct the formulas layer by layer.

    \paragraph{Base case:} $\layerIdx = 1$.
    We begin by specifying first-layer formulas, which work over $\alphabet$.
    We define for $\syma, \symb \in \alphabet$:
    \begin{equation} \label{eq:formula-base-case}
        \formulaAdd{1}{\syma}{\symb} 
        \defeq \atom_{\syma} 
        \land \underbrace{\bigwedge_{\substack{\sym \in \alphabet: \\ \symb \prec_\syma \sym}} \neg \past \atom_\sym}_{\substack{\text{No previous symbols with} \\ \text{higher scores than \symb}}}
        \land \past \Big(\underbrace{\atom_{\symb} \land \bigwedge_{\substack{\sym \in \alphabet\setminus\set{\symb}: \symb \simeq_\syma \sym}} \neg \past \atom_{\sym}}_{\substack{\text{There is $\symb$ in the past with no} \\ \text{equally-scored symbols in its past}}}\Big).
    \end{equation}
    In words, $\symb$'s value is added to $\syma$'s static embedding if 
    \begin{enumerate*}[label=\textit{(\roman*)},mode=unboxed]
        \item there are no symbols in the past of $\syma$ that have a higher score than $\symb$ and
        \item there exists a position with $\symb$ in the past such that there are no symbols with equal scores to its left (leftmost tiebreaking).
    \end{enumerate*}

    \paragraph{Inductive step:} $\layerIdx > 1$.
    Let now $\tflayer$ be the transformer layer at layer $\layerIdx > 1$ and assume we have correctly constructed $\formulaAdd{\layerIdx'}{\vx}{\vy}$ for $\layerIdx' < \layerIdx$.

    Firstly, we define the formulas $\atom_\vx^{(\layerIdx)}$ that, analogously to $\atom$, specify the presence of contextual representations for elements in $\alphabet^{2^{\layerIdx}}$. 
    Writing $\vx \in \alphabet^{2^{\layerIdx}}$ as $\vx = (\vz_0, \vz_1, \ldots, \vz_{\layerIdx-1})$ with $\vz_0 \in \alphabet$ and $\vz_{\layerIdx'} \in \alphabet^{2^{\layerIdx'}}, \layerIdx' \in \set{0, \ldots, \layerIdx-1}$, we define:
    \begin{equation}
        \atom_\vx^{(\layerIdx)} = 
        \atom_{\vz_0} \land \underbrace{
        \bigwedge_{\layerIdx' = 1}^{\layerIdx-1} \formulaAdd{\layerIdx'}{\vx_{\leq \layerIdx'-1}}{\vz_{\layerIdx'}}}_{\substack{\text{Verify correct representations} \\ \text{throughout layers are added to $\vz_0 \in \alphabet$}}}
    \end{equation}
    where $\vx_{\leq \layerIdx'} = (\vz_0, \ldots, \vz_{\layerIdx'})$.
    This checks the presence of $\vx \in \alphabet^{2^{\layerIdx}}$ as the contextual representation by asserting that the individual representations in $\vx$ at lower levels were indeed added by checking the formulas $\formulaAdd{\layerIdx'}{\vx_{\leq \layerIdx'-1}}{\vz_{\layerIdx'}}$.
    
    We now define, for $\vx, \vy \in \generalAlphabet = \alphabet^{2^{\layerIdx}}$:
    \begin{equation}
        \formulaAdd{\layerIdx}{\vx}{\vy} = \bigvee_{\ordss \in \orderedSubsets{\generalAlphabet}} \Big[ \underbrace{\actualOrder^{(\layerIdx - 1)}(\ordss)}_{\substack{\text{Identify correct ordered subset of} \\ \text{representations in the previous layer}}} \land \underbrace{\bestsymbol^{(\layerIdx - 1)}(\vx, \vy, \ordss)}_{\substack{\text{Check if $\vy$ is best representation for $\vx$} \\ \text{given the correct ordered subset $\ordss$}}}\Big]
    \end{equation}
    Intuitively, the formula iterates over all possible ordered subsets of $\generalAlphabet$, checks which one describes the string in the past, and then asserts whether $\vy$ is the best symbol to add to $\vx$ given the set of contextual representations $\ordss$.
    Here, $\actualOrder^{(\layerIdx)}$ is a formula that checks whether $\symOrdFun{\str} = \ordss$ by making sure that the string in the past follows the same order as $\ordss$:
    \begin{equation}
        \actualOrder^{(\layerIdx)}(\ordss) \defeq
        \underbrace{\left[\past (\atom^{(\layerIdx)}_{\ordssEl_1} \land \past (\atom^{(\layerIdx)}_{\ordssEl_2} \land \cdots \land (\past \atom^{(\layerIdx)}_{\ordssEl_{|\ordss|}})))\right]}_{\text{Elements of $\ordss$ are present in correct order}} \land 
        \underbrace{\bigwedge_{\substack{\ordssEl \in \generalAlphabet \setminus \ordss}} \neg \past \atom^{(\layerIdx)}_\ordssEl}_{\substack{\text{Representations not in $\ordss$} \\ \text{are not present}}}
    \end{equation}
    Analogously to \cref{eq:formula-base-case}, $\bestsymbol^{(\layerIdx)}$ checks whether $\vy$ is the best symbol to add to $\vx$ given the set of contextual representations $\ordss$ by 
    \begin{enumerate*}[label=\textit{(\roman*)},mode=unboxed]
        \item asserting $\vx$ is at the current position,
        \item there are no representations in the past of $\vx$ given the current ordered subset $\ordss$ that have a higher score than $\vy$ and
        \item there exists a position in $\ordss$ with $\vy$ in the past such that there are no representations with equal scores to its left (leftmost tiebreaking).
    \end{enumerate*}
    \begin{equation}
        \bestsymbol^{(\layerIdx)}(\vx, \vy, \ordss) \defeq 
        \atom_{\vx}^{(\layerIdx)} 
        \land \underbrace{\bigwedge_{\substack{\vz \in \ordss: \\ \vy \prec_\vx \vz}} \neg \past \atom_\vz^{(\layerIdx)}}_{\substack{\text{No previous representations} \\ \text{with higher scores than $\vy$}}}
        \land \past \Big(\underbrace{\atom_{\vy}^{(\layerIdx)} \land \bigwedge_{\substack{\vz \in \ordss\setminus\set{\vy}: \\ \vy \simeq_\vx \vz}} \neg \past \atom_{\vz}^{(\layerIdx)} }_{\substack{\text{There is $\vy$ in the past with no} \\ \text{equally-scored representations in its past}}}\Big).
    \end{equation}

    Finally, let $\tflayer$ be the final transformer layer and let $\fset \subseteq \alphabet^{2^{\layernumber}}$ be the set of representations for $\eos$ that lead to string acceptance by $\tf$. 
    The $\ptl$ formula $\tlf$ representing $\tf$ simply has to check whether the representation for $\eos$ is in $\fset$:
    \begin{equation}
        \tlf = \bigvee_{\vx \in \fset} \atom^{(\layernumber)}_\vx.
    \end{equation}
\end{proof}

\subsection{$\tfFL$ as \pofaAcr{}s} \label{app:transformer-to-pofsa}

\transformerToPOFATheorem
\begin{proof} 
    We will construct a semiautomaton $\wfsa$ that will, after reading $\strlet$, store in its state the ordered subsets $\sX^{(\layerIdx)} \in \orderedSubsets{\alphabet^{2^\layerIdx}}$ of contextual representations for all the transformer layers.
    In other words, it will hold $\nLayers$ \emph{equivalence classes} of the string $\strlet$, one for each layer.
    This state will be updated sequentially according to the self-attention mechanism implemented by the transformer.
    
    Formally, given a transformer $\tf \in \tfFL$ with the attention mechanism implemented by the function $\tffuncFun{\vx}$ (cf. \cref{eq:tflay-func}) over the alphabet $\alphabet$, we define the semiautomaton $\wfsa = \mleft(\alphabet, \states, \trans\mright)$. 
    We take the set of states $\states$ to be
    \begin{equation}
        \states \defeq \underbrace{\orderedSubsets{\alphabet} \times \cdots \times \orderedSubsets{\alphabet^{2^\nLayers}}}_{\text{Ordered sets of representations of all layers.}}.
    \end{equation}
    For clarity, we will explicitly write out the states $\stateq \in \states$ in their components:
    \begin{equation}
        \stateq = \begin{pmatrix}
            \sX^{(0)} \\
            \sX^{(1)} \\
            \vdots \\
            \sX^{(\nLayers)}
        \end{pmatrix}
    \end{equation}
    with $\sX^{(\layerIdx)} \in \orderedSubsets{\alphabet^{2^{\layerIdx}}}$ for $\layerIdx \in \set{0, \ldots, \nLayers}$.

    $\wfsa$ will update $\sX^{(\layerIdx)}$ with new occurrences of contextual representations by ``appending'' new contextual representations.
    Let us describe how the transition function $\trans$ updates the state $\stateq$ upon reading the symbol $\sym$.
    We write $\stateq$ for the source state and $\sX^{(\layerIdx)}$ for its components, and $\stateq'$ for the target state and $\sX'^{(\layerIdx)}$ for its components.
    
    We then define $\vx'^{(0)} = \sym$ to mean that the static representation of this symbol is the symbol itself.
    For $\layerIdx \geq 1$, we define
    \begin{equation} \label{eq:tf-pofa-update}
        \vx'^{(\layerIdx)} \defeq \begin{pmatrix} 
            \vx'^{(\layerIdx - 1)} \\ 
            \tffuncFun{\vx'^{(\layerIdx - 1)}}\mleft(\sX^{(\layerIdx - 1)}\mright)
        \end{pmatrix} 
    \end{equation}
    which simulates the $\layerIdx$\textsuperscript{th} layer of $\tf$ by
    \begin{enumerate*}[label=\textit{(\arabic*)},mode=unboxed]
        \item copying the symbol's representation $\vx'^{(\layerIdx - 1)}$ from the previous layer into the first component (residual connection) and
        \item computing the attended-to representation based on all the contextual representations seen so far at the previous layer ($\sX^{(\layerIdx - 1)}$) and the symbol's representation at the previous layer ($\vx'^{(\layerIdx - 1)}$).
    \end{enumerate*}
    Crucially, \cref{eq:tf-pofa-update} can be computed \emph{in advance} (knowing the scoring function $\scoref$) for any $\sX^{(\layerIdx)} \in \orderedSubsets{\alphabet^{2^\layerIdx}}$ and $\vx \in \alphabet^{2^\layerIdx}$ for all $\layerIdx \in \set{0, \ldots, \nLayers}$ due to the finiteness of all the considered states.
        
    We then update the set of observed contextual representations as 
    \begin{equation}
       \sX'^{(\layerIdx)} \defeq \sX^{(\layerIdx)} \cup \set{\vx'^{(\layerIdx)}} \label{eq:include-new-rep}
    \end{equation}
    to incorporate the information about the new contextual representations into the ordered sets of seen contextual representations at each layer $\layerIdx \in \set{0, \ldots, \nLayers}$.
    The union in \cref{eq:include-new-rep} is to be interpreted as adding an element to an ordered set.
    
    Defining
    \begin{equation}
        \stateq' = \begin{pmatrix}
            \sX'^{(0)} \\
            \sX'^{(1)} \\
            \vdots \\
            \sX'^{(\nLayers)}
        \end{pmatrix}
    \end{equation}
    and setting $\trans(\stateq, \sym) = \stateq'$, for all choices of $\stateq$ and $\sym$, we have defined the transition function $\trans$ that updates each state with the new contextual representations at each layer.

    The set of observed contextual representations $\sX$ satisfies a partial order as the concatenation of a new representation at a new position at every step implies we transition into novel states. 
    This further implies that the semiautomaton is partially ordered.
    
    We construct an \emph{automaton} from $\wfsa$ by setting the initial and final states.
    We set the initial state to be the one with empty ordered subsets of observed representations: $\sX^{(\layerIdx)} \defeq \emptyset$ for all $\layerIdx \in \set{0, \ldots, \nLayers}$.
    A subset of $\orderedSubsets{\alphabet^{2^\nLayers}}$ will lead to $\tf$ accepting a string. 
    We set the states with $\sX^{\nLayers}$ in that set to be final, leading to an equivalent automaton.
\end{proof}

\section{Duality with $\tfPR, \ftl$}\label{app:duality}
The class of transformers $\tfFL$, the main focus of the paper, has a natural symmetric characterization in $\tfPR$: 
While $\tfFL$ can only peek strictly into the past, $\tfPR$ can symmetrically only peek strictly into the future using strict past masking and rightmost \uha. 
$\tfPR$ can be informally seen as transformers in $\tfFL$ that instead read symbols \emph{right-to-left}, only considering the future when updating a symbol representation, analogously to the duality between $\gR$-trivial and $\gL$-trivial languages \citep{BRZOZOWSKI198032}.

Thus, all results for $\tfFL$ and $\ptl$ apply analogously to the symmetric case of $\tfPR$ and $\ftl$.  
$\ftl$ is the dual of $\ptl$---it only permits peeking into the future rather than the past. 
Similarly, partially ordered reverse automata (\rpofaAcr{}) are semiautomata whose reversal (automaton constructed by inverting the directionality of the transitions) are \pofaAcr{}s. 
The reverse of a \rpofaAcr{} is then homomorphic to a cascade product of half-resets. 
We thus can write the dual statement of \cref{thm:informal-2}.
\begin{restatable}{theorem}{DualityTheorem}\label{thm:duality}
    Let $\tf \in \tfPR$ be a transformer. 
    Then, there exists an equivalent formula $\tlf \in \ftl$. 
    
    Let $\tlf \in \ftl$ be a formula. 
    Then, there exists an equivalent transformer $\tf \in \tfPR$.
\end{restatable}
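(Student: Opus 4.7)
The plan is to reduce the dual statement to the main result (\cref{thm:braspFL-to-ptl,thm:ptl-to-braspFL}, combined with \cref{thn:braspFL-tfFL-equivalence}) by a systematic reversal construction. Define the reversal map $\str = \sym_1 \cdots \sym_\length \mapsto \str^{\reverse} = \sym_\length \cdots \sym_1$, and for any language $\lang \subseteq \kleene{\alphabet}$ let $\lang^\reverse \defeq \set{\str^\reverse \mid \str \in \lang}$. The key observation is that, at position $\tstep$ of $\str$, strict past masking selects positions $\tstep' > \tstep$ and rightmost tiebreaking selects the largest such maximizer; under reversal, these correspond exactly to position $\length - \tstep + 1$ of $\str^\reverse$ under strict future masking with leftmost tiebreaking. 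Since $\uha$ is symmetric and uses no positional encodings, the attention scores depend only on the content of the involved contextual representations, so the same transformer layers process $\str^\reverse$ under $\futuremask, \leftmost$ exactly as they processed $\str$ under $\pastmask, \rightmost$.

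\paragraph{From $\tfPR$ to $\ftl$.}
Given $\tf \in \tfPR$ recognizing $\langFun{\tf}$, consider the transformer $\tf' \in \tfFL$ obtained by substituting $\pastmask \mapsto \futuremask$ and $\rightmost \mapsto \leftmost$ in every attention head while keeping all other parameters identical, and by defining acceptance via a mirrored set of final representations. By the reversal observation, $\tf'$ recognizes $\langFun{\tf}^\reverse$. Applying the already-established equivalence $\tfFL \equiv \braspFL \equiv \ptl$ (\cref{thn:braspFL-tfFL-equivalence,thm:braspFL-to-ptl}) yields a formula $\tlf' \in \ptl$ with $\langFun{\tlf'} = \langFun{\tf}^\reverse$. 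Finally, define $\tlf \in \ftl$ to be the formula obtained by replacing every occurrence of $\past$ in $\tlf'$ by $\future$, and adopting the dual evaluation convention $\str \models \tlf \iff \str, 0 \models \tlf$ (a virtual position before position $1$, dual to position $\length + 1$ for $\ptl$). A straightforward induction on formula structure shows $\str^\reverse, \length - \tstep + 1 \models \tlf' \iff \str, \tstep \models \tlf$, whence $\langFun{\tlf} = \langFun{\tf}$.

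\paragraph{From $\ftl$ to $\tfPR$.}
Conversely, given $\tlf \in \ftl$, replace every $\future$ with $\past$ to obtain $\tlf' \in \ptl$ satisfying $\langFun{\tlf'} = \langFun{\tlf}^\reverse$. By \cref{thm:ptl-to-braspFL} and \cref{thn:braspFL-tfFL-equivalence}, there exists a transformer $\tf' \in \tfFL$ equivalent to $\tlf'$. Mirror its attention heads by flipping $\futuremask \mapsto \pastmask$ and $\leftmost \mapsto \rightmost$ to obtain $\tf \in \tfPR$; by the reversal observation again, $\langFun{\tf} = \langFun{\tf'}^\reverse = \langFun{\tlf}$.

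\paragraph{Main obstacle.}
The delicate step will be pinning down the acceptance conventions so that the correspondence between evaluation positions is clean. $\ptl$ formulas are defined to be evaluated at the virtual position $\length + 1$; for $\ftl$ the natural dual is a virtual position $0$, and one must verify that with this choice the inductive translation $\past \leftrightarrow \future$ preserves satisfaction under reversal at every sub-formula and at every matching position pair $(\tstep, \length - \tstep + 1)$. Similarly, on the transformer side, the final accepting set $\sF_{\tf'}$ derived from $\sF_\tf$ has to be defined so that the encoding of the reversed string at the mirrored output position lands in the accepting set exactly when the original did. Once these conventions are fixed, both directions follow mechanically from the symmetry of \uha with no positional encodings and the already-proved $\tfFL \equiv \ptl$ equivalence.
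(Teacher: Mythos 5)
Your proposal is correct and takes essentially the same approach as the paper: the paper's Appendix D argues informally that $\tfPR$ is just $\tfFL$ reading the string right-to-left and that all results dualize, which is precisely the reversal symmetry you make explicit by flipping masking/tiebreaking, passing through the established $\tfFL \equiv \ptl$ equivalence, and swapping $\past \leftrightarrow \future$ with the mirrored (position-$0$) acceptance convention. If anything, your write-up is more detailed than the paper's own informal treatment, and your flagged "obstacle" about acceptance conventions is exactly the point the paper leaves implicit.
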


\end{document}